\definecolor{mytealuno}{RGB}{30,150,250}
\definecolor{mytealdos}{RGB}{20,100,150}
\definecolor{mytealtres}{RGB}{10,50,150}
\definecolor{myred}{RGB}{204,51,17}
\definecolor{mygreen}{RGB}{0,0,0}
\pgfplotsset{compat=1.10, 
cycle list={%
{draw=myteal,mark=o},
{draw=myred, mark=o}}}
\DeclareMathAlphabet{\altmathcal}{OMS}{cmsy}{m}{n}
\DeclareMathAlphabet{\altmathcal}{OMS}{cmsy}{m}{n}
\DeclarePairedDelimiter\ceil{\lceil}{\rceil}
\DeclarePairedDelimiter{\norm}{\lVert}{\rVert}
\newtheorem{theorem}{Theorem}
\newtheorem{claim}{Claim}
\newtheorem{lemma}{Lemma}
\newtheorem{definition}{Definition}
\newcommand{\algoname}{Fed-FairX-LinUCB}
\newcommand{\ourprivalgo}{Priv-FairX-LinUCB}
\title{Fairness and Privacy Guarantees in Federated Contextual Bandits}
\author{%
  Sambhav Solanki \\
  IIIT, Hyderabad \\
  \texttt{sambhav.solanki@research.iiit.ac.in} \\
  \AND
  Shweta Jain \\
  IIT, Ropar \\
  \texttt{shwetajain@iitrpr.ac.in} \\
  \And
  Sujit Gujar \\
  IIIT, Hyderabad \\
  \texttt{sujit.gujar@iiit.ac.in} \\
}
\begin{document}

\maketitle

\begin{abstract}
This paper considers the contextual multi-armed bandit (CMAB) problem with fairness and privacy guarantees in a federated environment. We consider merit-based exposure as the desired \emph{fair} outcome, which provides exposure to each action in proportion to the reward associated. We model the algorithm's effectiveness using fairness regret, which captures the difference between fair optimal policy and the policy output by the algorithm. Applying fair CMAB algorithm to each agent individually leads to fairness regret linear in the number of agents. We propose that collaborative -- federated learning can be more effective and provide the algorithm \algoname\ that also ensures differential privacy. The primary challenge in extending the existing privacy framework is designing the communication protocol for communicating required information across agents. A naive protocol can either lead to weaker privacy guarantees or higher regret. We design a novel communication protocol that allows for (i) Sub-linear theoretical bounds on fairness regret for \algoname\ and comparable bounds for the private counterpart, \ourprivalgo\ (relative to single-agent learning), (ii) Effective use of privacy budget in \ourprivalgo. We demonstrate the efficacy of our proposed algorithm with extensive simulations-based experiments. We show that both \algoname\ and \ourprivalgo\ achieve near-optimal fairness regret.

\end{abstract}

% %%%%%%%%%%%%%%%%%%%%%%%%%%%%%%%%%%%%%%%%%%%%%%%%%
\section{Introduction}\label{sec:intro}
% %%%%%%%%%%%%%%%%%%%%%%%%%%%%%%%%%%%%%%%%%%%%%%%%%
The \emph{bandit} problem \cite{auer} is a well-known problem encapsulating the exploration and exploitation trade-off in online learning. It has a wide array of applications, such as crowdsourcing \cite{tran2014efficient}, recommendation systems \cite{li2010contextual}, sponsored search auctions \cite{abhishek2020designing}, service procurement \cite{badanidiyuru2013bandits}, etc. This paper considers the contextual \emph{multi-armed bandit} (MAB) problems in a \emph{federated} setting.

Linear contextual bandits \cite{li2010contextual} associate dynamic contexts with each action by assuming that the reward for each action is modeled as a fixed but unknown linear combination of the context and thus aims to learn these linear weights for maximizing the reward of a single learning agent. Multiple agents can collaborate in many real-world applications such as crowdsourcing, service procurement, and recommender systems for better effective learning~\cite{he2022a,reda2022near,solanki2022differentially}. For example, in crowdsourcing, requesters (agents) of similar tasks intend to learn the qualities of a pool of workers (actions), which are context-dependent. In such examples, agents can help each other by collaborating to learn the correlation between worker attributes (action context) and task completion proficiency (rewards) faster. Such collaborative learning should be allowed without sharing sensitive data (such as specific worker selection in any given round) among the agents while allowing for effective learning, i.e., it should protect the privacy of individual agents' sensitive information. The literature model collaborative learning with privacy requirements via the paradigm of \emph{federated learning} for practical collaboration \cite{kairouz2021advances}. Recent works~\cite{solanki2022differentially,dubey2020differentially} have explored \emph{differential privacy} guarantees in \emph{federated bandits} which extend bandit problem in federated settings. %\cite{solanki2022differentially} focuses on combinatorial bandits while \cite{dubey2020differentially} focuses on contextual bandits. 

In many practical applications, actions often involve interactions with humans, e.g., workers in crowdsourcing. Here, it becomes crucial to ensure that each action receives sufficient exposure. Traditional bandit approaches exhibit a ``winner takes all'' behaviour~\cite{wang2021fairness}, which consistently favors the optimal action and deprives other actions of opportunity, leading to starvation among actions. We address this issue by considering \emph{fairness of exposure}~\cite{wang2021fairness} in multi-agent contextual bandit problems. Other fairness notions in the context of bandit problems, such as guaranteeing minimum exposure to each action \cite{patil2020achieving}, group fairness, and fair treatment \cite{NIPS2016_eb163727} depend solely on the rewards or prioritize fairness for the learning agents rather than the individual actions. %In our view, the concept of \emph{fairness of exposure} introduced in \cite{wang2021fairness} provides the most effective solution to the starvation problem. 
On the other hand, fairness of exposure ensures proportionality~\cite{Talmund,SUKSOMPONG201662} for the actions, meaning that every action would be selected proportional to its merit/reward. This is an essential indicator of individual fairness in ML algorithms and proportionality in game theoretical frameworks. The algorithm in \cite{wang2021fairness} works only for a single-agent setting. There are a few works \cite{hossain2021fair,biswas2023fairness} that provide fairness guarantees in a federated setting; however, these works consider fairness for agents instead of actions.

Fairness of exposure in bandits focuses on minimizing \emph{fairness regret}, which measures the deviation of action selection policy from the optimal policy satisfying fairness. For the first time, this paper provides fairness regret guarantees in the federated setting while ensuring privacy. One naive way to ensure fairness in federated learning is to integrate a  \emph{communication protocol} where all the agents communicate with each other in every round by sharing all the information they have about the actions with the existing fair algorithms. The communication in every round leads to minimum fairness regret, albeit it leads to maximum privacy leakage. Another extreme is not to allow any communication among agents. It leads to maximum privacy, but in the absence of collaborative learning, the regret blooms in terms of the number of agents. Therefore, developing an intelligent communication protocol that provides a regret bound that is sub-linear in the number of agents and extends to the private setting is essential.

This work designs a novel communication protocol for federated bandits while learning generalizes the techniques from FairX-LinUCB~\cite{wang2021fairness}, an algorithm designed for a single-agent setting, to a federated setting. We call our algorithm as \algoname. Our communication protocol is scalable to differentially private methods since the number of communication rounds is bounded while ensuring fairness given the bounded communication gaps. We denote the privacy-ensuring version of the proposed algorithm by \ourprivalgo. In summary, our paper solves the fair federated contextual MAB problem while ensuring differential privacy guarantees. Our contributions include: 
% \noindent\paragraph{Contributions}
\begin{enumerate}
    \item We introduce the notion of fairness for actions in federated contextual bandits.
    \item We propose a novel communication protocol and show that \algoname\ achieves sub-linear fairness regret in terms of the number of learning agents while being optimal in terms of the number of rounds up to a \(\log\) dependence term (Theorem ~\ref{theorem: fairness regret}).\footnote{It is trivially implied that fairness regret would scale linearly in \(m\) for non-collaborative learning.} 
    \item The proposed communication is extensible to privatizer routine from \cite{dubey2020differentially}. It lets us develop \ourprivalgo, which ensures differential privacy guarantees (for the agents).
    \item We theoretically show that \ourprivalgo\ achieves differential privacy guarantees while having bounded fairness regret (Theorem~\ref{theorem: private fairness regret}).
    \item We empirically show that \algoname\ and \ourprivalgo\ outperform a non-collaborative learner.
\end{enumerate}

%%%%%%%%%%%%%%%%%%%%%%%%%%%%%%%%%%%%%%%%%%%%%%%%%
\section{Related Work}
% We divide the related work into three parts,  traditional and  contextual bandits, fairness in bandits, and differential privacy.

\noindent \textbf{Federated Bandits.} Bandit problems~\cite{robbins,LAI19854,auer} with contextual nature~\cite{li2010contextual,NIPS2011_e1d5be1c} have gained significant prominence in both academia and industry. Moreover, analysing bandit problems in a federated setting~\cite{he2022a,reda2022near} has been an important exploration of cooperative learning.

\textbf{Privacy.}  Our work leverages federated learning, which does large dataset querying. We use differential privacy, introduced by \cite{dwork2006calibrating}, to provide privacy for context/reward information. Differential privacy is a rigorous mathematical notion of privacy that encapsulates the requirement that the probability of output should have minimal changes for neighbouring input datasets. \cite{privatecontinual} and \cite{dwork2010differential} introduced the notion of differential privacy under continual observation using a \textbf{tree-based algorithm}, which we leverage. This method has seen utilisation across several online learning problems \cite{tossou2016algorithms,NIPS2013_c850371f,kairouz2021advances,jain2012differentially}. \cite{shariff2018differentially} study differential privacy for the traditional contextual bandit setting but is limited to a single learning agent. Differential private federated bandits have been studied in \cite{liu2022federated} and \cite{solanki2022differentially}. However, our work is closely related to the important work of \cite{dubey2020differentially}, extending it for non-traditional bandit optimisation.

\noindent \textbf{Fairness in Bandits.} Significant progress has been made in traditional bandits, but bandits with fairness objectives have only recently gained popularity. \cite{NIPS2016_eb163727}, propose bandit fairness which is achieved by ensuring that a better arm is always chosen with at least the same likelihood as a worse arm. Several other works, including  \cite{chen2020fair,patil2020achieving}, aim to guarantee a minimum exposure for arms in the stochastic bandit problem. However, based on the discussion in Section ~\ref{sec:intro}, it remains unclear how much exposure would be enough. \cite{hossain2021fair} and \cite{biswas2023fairness} define fairness for a multi-agent setting, but fairness with respect to the agents rather than actions is considered.

The notion of fairness, for actions, in the aforementioned works is modelled as a constraint rather than a desired outcome, with reward maximisation being the primary objective. In our work, we use the concept of fairness of exposure, introduced by \cite{wang2021fairness} for the single-agent setting, which is an objective-oriented notion of fairness that addresses the problem of starvation among actions. Additionally, it is important to highlight that no work has previously studied proportionality-based fairness in a federated bandit setting with respect to the actions.
To the best of our knowledge, our work is the premiere work to generalize fair contextual bandits into a federated setting, in addition to being the first work to simultaneously incorporate the notion of fairness and privacy for the bandit problem.  

%%%%%%%%%%%%%%%%%%%%%%%%%%%%%%%%%%%%%%%%%%%%%%%%%
\section{Model Preliminaries}\label{sec:model}
%%%%%%%%%%%%%%%%%%%%%%%%%%%%%%%%%%%%%%%%%%%%%%%%%

\subsection{Setting and Notations}
We abstract the problem as a federated contextual bandit setting where each of \(M=[m]\) agents are learning about actions \(a\in \altmathcal{D}\). 
The bandit algorithm runs for \(T\) rounds, where, at each round \(t\), an agent \(i\in M\) observes a context
vector \(\altmathcal{X}_t^i = (x_t^i(a))_{a \in \altmathcal{D}}\)\ \ (\(\mid\mid x_t^i(a) \mid\mid_2 \leq 1; \forall a\)) with \(x_t^i(a) \in \mathbb{R}^d\)
and selects an action \(a_{t}^{i}\). Each agent observes a different context vector and selects an action independently at each round \(t\). The agent obtains a reward for a selected action \(a_{t}^{i}\) at time \(t\) which we represent as \(y_{t}^i(a_{t}^{i}) = \theta^{*} \cdot x_{t}^i(a_{t}^{i}) + \eta_{t}(a_{t}^{i})\). Here, \({\theta^{*}}\in \mathbb{R}^{d}\) is an unknown but fixed parameter. As standard in the literature,  \(\eta_{t}(a_{t}^{i})\) is a noise parameter, which is i.i.d. sub-Gaussian with mean  \(0\). Thus, the expected reward for an action \(a\) at time \(t\), for an  agent \(i\), is given by \(\mathbb{E}[y_{t}^i(a)]  = \theta^{*} \cdot x_{t}^i(a)\). We denote this reward by the quantity \(\mu_a\mid\altmathcal{X}_t^i\) representing the expected reward for an action \(a\), when \(i^{th}\) agent is observing the context vector \(\altmathcal{X}_t^i\). Note that \(\theta^*\) (the true parameter) is the same for all the agents and is learned by the agents till time \(T\) in a collaborative fashion while preserving the privacy of their contexts/reward observations and satisfying the fairness guarantees.

We denote the set of available contexts to all the agents at time \(t\) as \(\altmathcal{X}_t=(\altmathcal{X}_t^i)_{i\in M}\), \(\altmathcal{X}^i=(\altmathcal{X}_t^i)_{t=1}^{t=T}\) and \(\altmathcal{X} = \{\altmathcal{X}^1,\altmathcal{X}^2,\ldots, \altmathcal{X}^m\}\). 
The goal of each agent \(i\) is to implement a policy \(\pi^i_t(\altmathcal{X}_t^i)\) which denotes the vector of probabilities of action selection by \(i^{th}\) agent at time \(t\). The probability of selecting action \(a\) is denoted by \(\pi^i_t(a, \altmathcal{X}_t^i)\). Instead of maximizing the reward,
each agent needs to ensure fairness amongst the actions so that all actions get a fair fraction of chances to avoid otherwise observed ``winner takes it all'' \cite{Mehrotra}  problem. Specifically, this setting aims to learn a policy that selects actions with probabilities proportional to their merit. Note that the objective here is to learn the fair policy rather than the optimal-reward policy.

Agents assign a merit score function $f^i$ over the actions based on their expected rewards for the given context.  \(f^i: \mathbb{R}^{+} \rightarrow \mathbb{R}^{+}\) where $f^i(\mu_a\mid \altmathcal{X}_t^i)$ denotes the score assigned by agent $i$ for the action $a$ when observed context is $\altmathcal{X}_t^i$. Each agent then needs to implement the policy such that the following fairness constraint, which is denoted as fairness of exposure, is satisfied:
\begin{equation}
    \label{eq:fair_cond}
    \frac{\pi^i_t(a, \altmathcal{X}_t^i )}{f^i(\mu_a\mid\altmathcal{X}_t^i)} = \frac{\pi^i_t(a^{\prime} , \altmathcal{X}_t^i)}{f^i(\mu_{a^{\prime}} \mid\altmathcal{X}_t^i)} \; \forall a, a^{\prime} \in \altmathcal{D}
\end{equation}

$f^i$ quantifies the utility of rewards derived from an arm for the agent. We assume Minimum merit and Lipschitz continuity properties on merit function \cite{wang2021fairness}. The minimum merit property provides a lower bound on the merit function, i.e. \(\min_{\mu} f^{i}(\mu) \geq \gamma\), \(\forall i \in M\) for some \(\gamma > 0\). Lipschitz continuity property assumes that the merit function is Lipschitz continuous, i.e., \(\forall \mu_{1}, \mu_{2}, i \in M\),  \(\lvert f^{i}(\mu_{1}) - f^{i}(\mu_{2}) \rvert \leq L \lvert \mu_{1} - \mu_{2} \rvert \) for some \(L > 0\).   

We denote the optimal policy by \(\pi_{*}^i(\altmathcal{X}_t^i)\) when \(\theta^*\) is known, i.e., at round \(t\), it satisfies fairness condition (Eq.~\ref{eq:fair_cond}). Note that given a context vector the optimal policy, \(\pi_{*}^i(.)\), does not depend on round \(t\), \(\pi_{*}^i(a, \altmathcal{X}_t^i) = \frac{f^i(\theta^* \cdot x_t^i(a))}{\sum_{a'\in \altmathcal{D}}f^i(\theta^* \cdot x_t^i(a'))}\). Typically, \(\theta^*\) being unknown, each agent is learning \(\theta^*\) and in turn the optimal policy through algorithm \(\altmathcal{A}\) over the rounds, taking actions using policy \(\pi_t^i(\cdot)\). \(\hat{\theta}_t^i\) is used to denote the learnt \(\theta^*\) for agent $i$ at time $t$. Unlike the optimal policy, \(\pi_t^i(\cdot)\) is round dependent. For agent \(i\) at round \(t\) the  \emph{instantaneous fairness regret} is defined as: $FR_t^i(\altmathcal{A}, \altmathcal{X}_t^i) = \sum_{a \in \altmathcal{D}} \lvert \pi_{*}^i(a, \altmathcal{X}_t^i)  - \pi_t^i(a, \altmathcal{X}_t^i)\rvert$.
As these agents learn about the same actions, they can communicate with each other about their estimates of \(\theta^{*}\) and learn it faster, reducing the per-agent fairness regret. We assume that all the agents deploy the same learning algorithm. Thus, we define the \emph{fairness regret} defined as:
\begin{definition}{Fairness Regret.}\label{def:fr} For a learning algorithm \(\altmathcal{A}\), we define fairness regret as \(FR(\altmathcal{A},T, \altmathcal{X}) = \frac{1}{m}\sum_{i\in M} FR^i(\altmathcal{A}, T, \altmathcal{X}^i)\) where \(FR^i(\altmathcal{A},T , \altmathcal{X}^i) = \sum_{t=1}^{T} FR_t^i(\altmathcal{A} ,\altmathcal{X}_t^i)\)
\end{definition}
Henceforth, we will avoid using \(\altmathcal{X}_t^i\) from fairness regret to avoid notation clutter. Additionally, since we are bounding it only for the algorithms in the paper, we refer to the above quantities as \(FR^i_t,FR_i, FR\). We also use
\(FR^i([T_1,T_2])\) to denote \( \sum_{t=T_1}^{t=T_2} FR^i_t\) and similarly \(FR([T_1,T_2])\).

% \section{Toy Example}
\subsection{Why fairness of exposure?}
We motivate with a single agent setting who is interested in assigning tasks to 3 workers with unknown completion times. Let the optimal task assignment (according to Eq. 1)distribution be $[0.14, 0.28, 0.56]$, where faster worker is assigned more tasks, if the goal is to minimize total project completion time while ensuring exposure guarantees to the workers. %This is a contextual bandit problem, where we balance learning worker speeds with assigning tasks efficiently.
Traditional regret optimization finds the best worker which does not lead to balanced/fairer task allocation.%, but this may not be optimal when multiple tasks can be done concurrently. More balanced/fairer task allocation can lead to faster project completion.
While some approaches try to incorporate fairness into bandit algorithms, they often fall short in the task assignment scenario:

\begin{itemize}
    \item Delta-fairness~\cite{joseph2016fair,Shaarad-fair}, which prioritizes arms (workers) with higher rewards will essentially lead to giving maximum tasks to optimal (faster) worker, in this case the worker 3, however it does not provide any exposure guarantee.%, does not guarantee optimal completion time. For example, assigning all tasks to the fastest worker might not be efficient when others can work concurrently but satisfy the fairness constraints. %\sg{are we going to work with multiple pulls in this paper?I don't think so...so this explanation of concurrent pulls is not right here}
    \item Minimum share fairness~\cite{patil2020achieving,chen2020fair}ensures each worker receives a minimum fraction of tasks. Utility optimisation in this case relies on knowing expected completion times, which are unknown in our problem. This makes its effectiveness uncertain.
\end{itemize}   

In contrast, proportionality-based fairness offers a more promising approach by directly aligning fairness with utility optimization. Furthermore, when workers are involved in multiple projects simultaneously, (i.e., multiple agents are learning about the workers) federated learning with differential privacy can further optimize task assignment by sharing limited information privately, leading to faster learning and improved project completion times.%Proportionality-based fairness tackles this challenge by aiming for a distribution of task assignments proportional to expected rewards (inversely proportional to expected completion times). This incentivizes learning worker speeds while ensuring efficiency. In our example, the optimal probability distribution is around [0.14, 0.28, 0.56], favoring faster workers but distributing tasks to achieve faster overall completion compared to purely assigning to the fastest worker. This highlights the key distinction of proportionality-based fairness: it balances both fairness and reward optimization in this scenario, so as to optimise project utility.

\subsection{Fairness in Single-Agent Contextual MAB}
%%%%%%%%%%%%%%%%%%%%%%%%%%%%
We start with some notation and summarize FairX-LinUCB for a single-agent MAB setting \cite{wang2021fairness}.
\begin{itemize}
    \item If \(H\) is positive semi-definite matrix, it is represented by \(H \succeq 0\). Additionally, for two matrices \(H_1\) and \(H_2\), \(H_1 \succeq H_2\) implies \(H_1-H_2 \succeq 0\).
    \item The \(H-norm\) for vector \(y\) w.r.t. a positive semi-definite matrix \(H\), is denoted by \(\left\| y \right\|_{H} = \sqrt{y^\intercal  H y}\)
\end{itemize}

The central idea is to construct a confidence region, \(CR_t\), at every round \(t\), containing \(\theta^{*}\) with high probability. The confidence region is an ellipsoid centered around the linear regression estimate \(\hat{\theta}_{t} = ( I \lambda + X_{< t} {X_{< t}}^{\intercal})^{-1}  {X_{< t}}^{\intercal} Y_{< t}\). Here, \(X_{< t} = [{x_{1}(a_{1})}^{\intercal} \ldots {x_{t-1}(a_{t-1})}^{\intercal}]^{\intercal}\), \(Y_{< t} = [{y_{1}(a_{1})} \ldots y_{t-1}\) \((a_{t-1})]^{\intercal}\), and \(x_t(a_t)\) denotes the context of the selected action \(a_t\) at time \(t\). The proposed algorithm then optimistically selects \(\theta_{t}\) from the confidence region, and the selection policy, \(\pi_{t}\), using \(\theta_{t}\) based on the constraints. The selection policy defines a probability distribution over the actions, based on which an action is chosen, and the observed rewards for the chosen action are used to improve the estimation further. Optimistic selection is a non-convex-constrained optimization problem, and projected gradient descent is used to find approximate solutions.
%%%%%%%%%%%%%%%%%%%%%%%
\subsection{Privacy requirements}
%%%%%%%%%%%%%%%%%%%%%%%
We consider privacy over the agent-action interaction, i.e., for any agent \(i\), we consider that the context vectors (\(\altmathcal{X}^{i}\)) and the observed feedback (\((y_{t}^{i} (a_{t}^{i}))_{t \in [T]}\)) should be kept private. Considering that agent only needs to store \(x_{t}^{i} (a_{t}^{i})\) for feedback estimation, we use the differential privacy definition with respect \((x_{t}^{i} (a_{t}^{i}), y_{t}^{i} (a_{t}^{i}))_{t \in [T]}\). Our differential privacy notion matches the one defined in \cite{dubey2020differentially}. Here, we leverage their differential privacy definition for our setting. Let us consider two sets \(\altmathcal{S}_{i} = (x_{t}^{i} (a_{t}^{i}) , y_{t}^{i} (a_{t}^{i}))_{t \in [T]}\) and \({\altmathcal{S}_{i}}^{\prime} = ({x_{t}^{i} (a_{t}^{i})}^{\prime} , {y_{t}^{i} (a_{t}^{i})}^{\prime})_{t \in [T]}\). They are considered to be \(t^{\prime}-neighbors\) if at all time steps \(t \neq t^\prime\), \((x_{t}^{i} (a_{t}^{i}) , y_{t}^{i} (a_{t}^{i})) = ({x_{t}^{i} (a_{t}^{i})}^{\prime} , {y_{t}^{i} (a_{t}^{i})}^{\prime})\).

\begin{definition}{Federated Differential Privacy~\cite[Definition~1]{dubey2020differentially}}\label{def:dp} In a federated learning setting with \(m \geq 2\) agents, a randomized multi-agent contextual bandit algorithm \(\altmathcal{A} = (\altmathcal{A}^{i})_{i=1}^m\) is \((\epsilon,\delta, m)-\) federated differentially private under continual multi-agent observation if for any \(i,j \in M\) such that \(i \neq j\), any \(t\) and set of sequences \(\mathbb{S}_i = (\altmathcal{S}_k)_{k=1}^{m}\) and \({\mathbb{S}_i}^{\prime} = (\altmathcal{S}_k)_{k=1, k\neq i}^{m} \bigcup {\altmathcal{S}_{i}}^{\prime}\) such that \({\altmathcal{S}_{i}}^{\prime}\) and \({\altmathcal{S}_{i}}\) are \(t^{\prime}-\)neighbors, and any subset of actions \((a_{t}^{j})_{t \in [T]} \subset \altmathcal{D} \times \ldots \times \altmathcal{D}\) of actions, it holds that:
\begin{align*}
    \mathbb{P}(\altmathcal{A}^{j}(\mathbb{S}_{i}) \in (a_{t}^{j})_{t \in [T]}) \leq e^{\epsilon} . \mathbb{P}(\altmathcal{A}^{j}({\mathbb{S}_{i}}^{\prime}) \in (a_{t}^{j})_{t \in [T]}) + \delta
\end{align*}
\end{definition}

\noindent Here, the quantity, $\altmathcal{L}_{A^j(\mathbb{S}_i)\mid\mid A^j(\mathbb{S}_i^\prime)}^{o} = \log(\frac{\mathbb{P}(\altmathcal{A}^{j}(\mathbb{S}_{i}) \in o)}{ \mathbb{P}(\altmathcal{A}^{j}({\mathbb{S}_{i}}^{\prime}) \in o)})$ refers to privacy loss incurred by observing output \(o = (a_{t}^{j})_{t \in [T]}\).
\paragraph{\textbf{Goal:}} Each agent's goal is to learn \(\theta^*\) while minimizing fairness regret (Definition \ref{def:fr}); albeit ensuring differential privacy guarantees (Definition \ref{def:dp}).  

%Next section proposes a non-private algorithm, \algoname, followed by its privacy-guaranteeing version \ourprivalgo.
%%%%%%%%%%%%%%%%%%%%%%%%%%%%%%%%%%%%%%%%%%%%%%%%%
\section{Multi-Agent Fair and Private Contextual Bandit Algorithm}
%%%%%%%%%%%%%%%%%%%%%%%%%%%%%%%%%%%%%%%%%%%%%%%%%
% \sj{This para needs to be written better. First line is not clear, it is not clear what is meant by privacy budget? I feel there is sudden jump.}
The communication protocol currently used in federated bandits literature is not suitable for achieving bounded fairness regret. It is important to limit the number of communication rounds and maintain a constrained gap between communication instances in order to ensure both bounded fairness regret, and scalability with private methods. The total privacy loss, which is the composition of privacy losses incurred overall communication rounds, is proportional to the number of communication rounds. Thus, it follows that for a budgeted (fixed) total privacy loss, the maximum possible per-round privacy loss is inversely proportional to the number of communication rounds. As a result, the number of communication rounds should be bounded to control the accumulation of noise and maintain privacy within acceptable limits. At the same time, bounding the gaps between communication rounds is necessary to make fairness regret claims. 

In this section, we firstly build an algorithm, \algoname, that learns \(\theta^*\) collectively amongst \(m\) agents using a novel communication protocol. We then design a privacy-preserving version, \ourprivalgo, in Section~\ref{ssec:priv_algo}.

%%%%%%%%%%%%%%%%%%%%%%%
\subsection{\algoname} \label{subs:Multi-Agent Fair Contextual Bandit Algorithm}
\begin{algorithm}[t!]
    \caption{\ourprivalgo}
    \begin{minipage}{\linewidth}
        \begin{algorithmic}[1] \label{alg: Fed-fairX}
            \STATE \textbf{Input:} \(\beta_{t}\), \([f^{i}]_{\forall i \in [m]}\), \(\lambda\), \(m\)
            %\FOR{$i = 1$ to $m$}
                \STATE \textbf{Initialization:} \(\forall i \in [m]\), \(V_{1}^{i} = S_{1}^{i} = U_{1} = \lambda\mathbf{I}_{d}\),  \(b_{1}^{i} = s_{1}^{i} = u_{1} = \mathbf{0}_{d}\),  \(\tau = 1\).
            %\ENDFOR
                \FOR{\(t = 1\) to \(T\)}
                    \FOR{\(i = 1\) to \(m\)}
                    \STATE Observe contexts \(\altmathcal{X}_{t}^{i}\);  \(\hat{\theta}_{t}^{i} = {(V_{t}^{i})}^{-1} b_{t}^{i}\);\ \(\mathbf{CR}_{t}^i = ( \theta : \norm{ \theta - (\hat{\theta}_{t}^{i}) }_{V_{t}^{i}} \leq \sqrt{\beta_{t}^{i}} )\)
                    \STATE \(\theta_{t}^{i} = \operatorname{\tiny{argmax}}_{\theta \in CR_{t}^{i}} \sum_{a\in \altmathcal{D}} \frac{f(\theta . x_{t}^{i}(a))}{\sum_{a^{'}\in \altmathcal{D}} f(\theta . x_{t}^{i}(a^{\prime}))} \theta . x_{t}^{i}(a)\)
                    \STATE Construct Policy \(\pi_{t}^{i}(a) = \frac{f(\theta_{t}^{i} . x_{t}^{i}(a))}{\sum_{a^{'}} f(\theta_{t}^{i} . x_{t}^{i}(a^{\prime}))}\)
                    \STATE Sample arm \(a_{t}^{i} \sim \pi_{t}^{i}\) and observe reward \(y_{t}^{i} (a_{t}^{i})\)
                    \STATE \(S_{t+1}^{i} = S_{t}^{i} + (x_{t}^{i} (a_{t}^{i})) (x_{t}^{i} (a_{t}^{i}))^{\intercal}\); \(s_{t+1}^{i} = s_{t}^{i} + (x_{t}^{i} (a_{t}^{i})) y_{t}^{i} (a_{t}^{i})\)
                    \IF{\(t == \tau\)}
                     \STATE \textit{Sync} \(\longleftarrow\) \textit{true}
                     \IF{\(t < \ceil{\frac{T}{md^{2}\log^{2}{(1+T/d)}}}\)}
                     \STATE \(\tau = 2\tau\)
                     \ELSE
                     \STATE \(\tau = \tau + \ceil{\frac{T}{md^{2}\log^{2}{(1+T/d)}}}\)
                     \ENDIF
                     \ENDIF
                %    \IF{$t < \ceil{\frac{T}{md^{2}\log^{2}{(1+T/d)}}}$ and $t == \tau$}
                 %       \STATE \textit{Sync} $\longleftarrow$ \textit{true}
                  %      \STATE $\tau += \tau$
                   % \ENDIF
                    %\IF{$\Delta_{t}^{i}  == \ceil{\frac{T}{md^{2}\log^{2}{(1+T/d)}}}$ }
                   %     \STATE \textit{Sync} $\longleftarrow$ \textit{true}
                    %\ENDIF
                    \IF{\textit{Sync}}
                        \STATE \([\forall j \in M]\) Send \(S_{t}^{j}, s_{t}^{j}  \rightarrow PRIVATIZER\) 
                        \STATE \([\forall j \in M]\) Receive \(\hat{U}_{t}^{j}, \hat{u}_{t}^{j} \leftarrow PRIVATIZER\)
                        \STATE \([\forall j \in M]\) Communicate \(\hat{U}_{t}^{j}, \hat{u}_{t}^{j}\) to others
                        \STATE \([\forall j \in M]\) \(U_{t+1} = \sum_{k =1 }^{M} \hat{U}_{t}^{k}\);  \(u_{t+1} = \sum_{k =1 }^{M} \hat{u}_{t}^{k}\);  \(S_{t}^{j} = \textbf{0}_{d \times d};\ s_{t}^{j} = \textbf{0}_{d};\ \Delta_{t}^{j} = 0\)
                        \STATE  \textit{Sync} \(\longleftarrow\) \textit{false}
                        
                    \ELSE
                        \STATE \(U_{t+1} = U_{t}^{i}\); \(u_{t+1} = u_{t}^{i}\); \( \Delta_{t+1}^{i} = \Delta_{t}^{i} + 1\)
                    \ENDIF                    
                    \STATE \(V_{t+1}^{i} = U_{t+1} + S_{t+1}^{i}\); \(b_{t+1}^{i} = u_{t+1} + s_{t+1}^{i}\)
                \ENDFOR
            \ENDFOR
        \end{algorithmic}
    \end{minipage}
\end{algorithm}

We consider a group of \(m\) agents actively participating in the contextual bandit problem and maintaining synchronization through periodic communication. Algorithm ~\ref{alg: Fed-fairX} without the privatizer routine represents \algoname . Essentially, the exact information of the agents is sent to other agents when communication is required. For any agent \(i\), at round \(t\), let the last synchronization round take place at instant \(t^{\prime}\). Then, there exist two sets of parameters. The first set of parameters is the set of all observations made by all \(m\) agents till round \(t^{\prime}\). We store this in terms of a shared gram matrix, \(U_{t} = \sum_{i \in M} ( \lambda I + \sum_{\tau = 1}^{t^{\prime}} (x_{\tau}^{i}(a_{\tau}^{i})) (x_{\tau}^{i}(a_{\tau}^{i}))^{\intercal})\), and a shared vector, \(u_{t} = \sum_{i \in M} \sum_{\tau = 1}^{t^{\prime}} (x_{\tau}^{i} (a_{\tau}^{i})) y_{\tau}^{i} (a_{\tau}^{i})\). Secondly, each agent has access to its own observations since the last communication round. We note those using the gram matrix \(S_{t}^{i} = \sum_{\tau = t^{\prime}}^{t} (x_{\tau}^{i}(a_{\tau}^{i})) (x_{\tau}^{i}(a_{\tau}^{i}))^{\intercal}\) and the reward vector \(s_{t}^{i} = \sum_{\tau = t^{\prime}}^{t} (x_{\tau}^{i} (a_{\tau}^{i})) y_{\tau}^{i} (a_{\tau}^{i})\), where \(t^{\prime}\) was the last communication round. The agents use combined parameters for estimating the linear regression estimate, \(\hat{\theta}_{t}^{i}\). For an agent \(i\), 
\(V_{t}^{i} =  U_{t} + S_{t}^{i}, \ b_{t}^{i} = u_{t} + s_{t}^{i}, \hat{\theta}_{t}^{i} =  (V_{t}^{i})^{-1}  b_{t}^{i}\). 
The agents then constructs a confidence region, \(CR_t^{i}\) around \(\hat{\theta}_{t}^{i}\).  Suitable sequence \([\sqrt{\beta_{t}^{i}}]_{i\in M,t \in [T]}\) needs to be used, ensuring that with high probability \(\forall i,t,\ \theta^* \in CR_{t}^{i}\). An optimistic estimate, \(\theta_{t}^{i}\) is selected from \(CR_t^{i}\) (line 6 of Algorithm. ~\ref{alg: Fed-fairX}). The agent selects the action using a policy construction, \(\pi_{t}^{i}\). This ensures fairness by assigning a probability distribution for action selection based on estimated merit. We now explain our communication protocol that achieve sub-linear fairness regret.

% {\textbf{Communication Protocol.} The agents communicate periodically to share their observations in order to accelerate the learning process. We propose that the agents communicate with increasing separation (separation doubling each time) between two communication rounds in the first \(\ceil{\frac{T}{md^{2}\log^{2}{(1+T/d)}}}\) rounds (line 16-17 of Algorithm ~\ref{alg: Fed-fairX}) and only communicate after every \(\ceil{\frac{T}{md^{2}\log^{2}{(1+T/d)}}}\) rounds thereafter. We note that the number of communication rounds using the above protocol is upper bounded by \(\ceil{2md^{2}\log^{2}{(1+T/d)}}\). 

%\sj{It will be better if we contrast this communication protocol with existing communication protocol like sujit suggested.} 

%\sj{If we communicate at all times, then how can we extend fair algorithm to multi-agent setting? Is it trivial? If yes, then say that first and then say that however communicating with all agents at all times lead to communication efficiency, therefore it is important to design a novel communication protocol.}

\textbf{Communication Protocol.} If the agents were to communicate in every round without any optimization, they could enhance their fairness regret by order of \(O(1/\sqrt{m})\). However, communicating at every round results in inefficiencies and potential privacy breaches. To address these concerns, our algorithm suggests a communication strategy allowing agents to communicate only \(\ceil{2md^{2}\log^{2}{(1+T/d)}}\) times while achieving comparable fairness regret performance. In our proposed approach, we suggest that the agents communicate with increasing intervals between two consecutive communication rounds during the first \(\ceil{\frac{T}{md^{2}\log^{2}{(1+T/d)}}}\) rounds (line 12-13 of Algorithm ~\ref{alg: Fed-fairX}). Subsequently, they communicate only after every \(\ceil{\frac{T}{md^{2}\log^{2}{(1+T/d)}}}\) rounds. Rapid communication in the initial rounds proves beneficial in practice, considering the trend in regret is sublinear in \(T\). Concurrently, the number of communication rounds and the gap between the communication rounds remain bounded. This distinguishes it from the communication protocols employed by \cite{dubey2020differentially,solanki2022differentially}, where the gaps between communication rounds can be of the order \(O(T)\), which makes it difficult to bound fairness regret.
In summary, on observing the context set, each agent utilizes their estimate of \(\theta^{*}\) to formulate a selection policy, which yields a probability distribution for choosing an action. Once an action is selected and the corresponding reward is observed, the agents update their local estimates and periodically exchange these updates with each other to enhance the accuracy of the shared estimates.
%\sa{We refrain from explicitly giving an algorithmic formulation of \algoname, the non-private algorithm, due to space constraints. However, we note that Algorithm ~\ref{alg: Fed-fairX} represents the \algoname on removing the privatizer routine (discussed in Section~\ref{ssec:priv_algo}).}
% For non-private agents, we propose that the agents share the gram matrices and the reward vectors constructed using their observations, with every other agent, allowing them to construct the shared gram matrix and shared vectors.
%%%%%%%%%%%%%%%%%%%%%%%
\subsection{\ourprivalgo}\label{ssec:priv_algo}

The key difference between \ourprivalgo\ and \algoname\ lies in the communication perturbation. In a non-private setting, we communicate exact observations about context and reward to all other agents. However, we must carefully add perturbation for the private setting to satisfy the differential privacy constraints mentioned in section ~\ref{sec:model}. In the private setting, let \(\hat{U}_{t}^{i} = \sum_{\tau = 1}^{t-1} (x_{\tau}^{i}(a_{\tau}^{i})) (x_{\tau}^{i}(a_{\tau}^{i}))^{\intercal} + H_{t}^{i},\ \hat{u}_{t}^{i} = \sum_{\tau = 1}^{t-1} (x_{\tau}^{i} (a_{\tau}^{i})) y_{\tau}^{i} (a_{\tau}^{i}) + h_{t}^{i}\) denote the perturbed contexts and rewards. Here \(H_{t}^{i}\) and \(h_{t}^{i}\) are noise additions used for perturbation. Here, \(V_{t}^{i} = \sum_{i\in M} \Hat{U}_{t}^{i} + S_{t}^{i}\) and \(b_{t}^{i} = \sum_{i\in M} \Hat{u}_{t}^{i} + s_{t}^{i}\), where \(S_{t}^{i}\) and \(s_{t}^{i}\) remains same as stated in Section~\ref{subs:Multi-Agent Fair Contextual Bandit Algorithm}. We note that \(V_{t}^{i}\) can also be represented as: $V_{t}^{i} = G_{t}^{i} + H_{t}^{i}$ with \(G_{t}^{i}\) denoting the gram matrix in absence of noise perturbations.

To achieve privacy, we introduce a privatized version of the synchronization process amongst the agents. We do so by using the privatizer routine, which uses a tree-based mechanism to communicate while limiting the noise addition. The tree-based mechanism for differential privacy maintains a binary tree of logarithmic depth in terms of communication rounds. The sequential data released at communication rounds are stored at the leaf nodes, while every parent node stores the sum of the child nodes' data. In addition, noise is sampled at each node to maintain privacy. This allows for returning partial sums by adding at max \(k\) nodes if \(k\) was the depth of the tree. While our algorithm vastly differs from the FedUCB algorithm~\cite{dubey2020differentially} in terms of objective constraint, arm selection protocol, and communication round selection, it resembles our algorithm in terms of linear regressor estimation in a federated setting. Based on this, we can use the privatizer routine with marginal changes to ensure privacy guarantees. The privatizer routine is formally outlined for completeness.  

\begin{algorithm}[t!]
    % \algsetup{linenosize=\small}
    % \scriptsize
    \caption{PRIVATIZER}
    \begin{minipage}{0.95\linewidth}
        \begin{algorithmic}[1] \label{alg: privatizer}
            \STATE \textbf{Input:} \(\epsilon, \delta, d, \tau\) (number of communication rounds), \(L\) (upper bound on norm of context vector)
            \STATE \textbf{Initialization:} 
            \STATE \(n = 1 + \ceil{\log \tau}\)
            \STATE \(\mathcal{T} \leftarrow \) a binary tree of depth \(n\)
            \FOR{each node \(i\) in \(\mathcal{T}\)}
                \STATE Create a noise matrix: \(\hat{N} \in \mathbb{R}^{d \times (d+1)}\), where \(\hat{N}_{kl} \sim \mathcal{N}(0,16n(L^{2} + 1)^{2} \log (2/\delta)^{2} /\epsilon^{2})\)
                \STATE \(N = (\hat{N} + \hat{N}^{\intercal})/\sqrt{2}\)
            \ENDFOR
            \STATE \textbf{Runtime:}
            \FOR{each communication round \(t\)}
                \STATE Receive \(S_{t}^{i}, s_{t}^{i}\) from agent, and insert it into \(\mathcal{T}\) as a \(d \times (d+1)\) matrix (Alg. 5, \cite{jain2012differentially})
                \STATE Receive \(M_{t}^{i}\) using the least nodes of \(\mathcal{T}\) (Alg. 5, \cite{jain2012differentially})
                \STATE \(\hat{U}_{t}^{i} = U_{t}^{i} + H_{t}^{i}\), top-left \(d\times d\) submatrix of \(M_{t}^{i}\)
                \STATE \(\hat{u}_{t}^{i} = u_{t}^{i} + h_{t}^{i}\), last column of \(M_{t}^{i}\)
                \STATE Return \(\hat{U}_{t}^{i}, \hat{u}_{t}^{i}\)
            \ENDFOR
        \end{algorithmic}
    \end{minipage}
\end{algorithm}

%In the next section, we provide theoretical guarantees for \algoname\ and  \ourprivalgo . 

%%%%%%%%%%%%%%%%%%%%%%%%%%%%%%%%%%%%%%%%%%%%%%%%%
\section{Theoretical Analysis} \label{sec:theory_analysis}
On a high level, the fairness regret proof considers a single hypothetical agent who plays \(mT\) rounds instead of considering \(m\) agents playing \(T\) rounds, each with sparse communication. The bounded deviation from this scenario to our intended setting is used to show the fairness regret analysis. Lemma \ref{lemma:Elliptical Potential} captures the fairness regret in terms of the determinant of the gram matrices, which is important to capture the deviation between the hypothetical agent and our intended set of agents, while lemma \ref{lemma:MDS} is useful for fairness regret bounds for a single-agent. Lemma \ref{lemma:Instanteous Regret} formalizes the instantaneous fairness regret, a prerequisite for proving Theorem \ref{theorem: fairness regret}.
%%%%%%%%%%%%%%%%%%%%%%%%%%%%%%%%%%%%%%%%%%%%%%%%%
\subsection{Regret Analysis}
The following lemma is useful in proving the fairness regret of \algoname.
\begin{lemma} \label{lemma:Elliptical Potential}
(Elliptical Potential~\cite[Lemma~22]{shariff2018differentially}). Let \({x}_1, \ldots, {x}_n \in R^d\) be vectors with each \(\left\| {x}_t\right\| \leq L\). Given a positive definite matrix 
\(U_1 \in R^{d \times d}\), define \(U_{t+1}:=U_t+x_t {x}_t^{\top}\) for all \(t\). Then
        $\sum_{t=1}^n \min \left\{1,\left\|x_t\right\|_{U_t^{-1}}^2\right\} \leq 2 \log \frac{\operatorname{det} U_{n+1}}{\operatorname{det} U_1} \leq 2 d \log \frac{\operatorname{tr} U_1+n L^2}{d \operatorname{det}^{1 / d} U_1}$
\end{lemma}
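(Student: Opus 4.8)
The plan is to establish the two inequalities separately, since they come from entirely different mechanisms: the left inequality is a telescoping/log-determinant identity combined with an elementary scalar bound, while the right inequality is a purely algebraic consequence of AM--GM applied to eigenvalues.

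For the first inequality, I would begin with the rank-one update structure \(U_{t+1} = U_t + x_t x_t^\top\) and invoke the matrix determinant lemma to obtain the exact identity
\[
\det U_{t+1} = \det U_t \bigl(1 + x_t^\top U_t^{-1} x_t\bigr) = \det U_t \bigl(1 + \|x_t\|_{U_t^{-1}}^2\bigr).
\]
Taking logarithms and summing over \(t\) telescopes to \(\log \tfrac{\det U_{n+1}}{\det U_1} = \sum_{t=1}^n \log\bigl(1 + \|x_t\|_{U_t^{-1}}^2\bigr)\). The remaining step is the elementary scalar inequality \(\min\{1, z\} \leq 2\log(1+z)\) for all \(z \geq 0\): for \(z \in [0,1]\) this follows because \(g(z) = 2\log(1+z) - z\) satisfies \(g(0)=0\) and \(g'(z) = \tfrac{1-z}{1+z} \geq 0\), while for \(z > 1\) the right side exceeds \(2\log 2 > 1\). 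Applying this termwise with \(z = \|x_t\|_{U_t^{-1}}^2\) yields \(\sum_{t=1}^n \min\{1, \|x_t\|_{U_t^{-1}}^2\} \leq 2\log\tfrac{\det U_{n+1}}{\det U_1}\).

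For the second inequality, I would bound the determinant by the trace via AM--GM over the (positive) eigenvalues: \(\det U_{n+1} = \prod_i \lambda_i \leq \bigl(\tfrac{1}{d}\sum_i \lambda_i\bigr)^d = \bigl(\tfrac{\operatorname{tr} U_{n+1}}{d}\bigr)^d\). Then I control the trace using \(\operatorname{tr} U_{n+1} = \operatorname{tr} U_1 + \sum_{t=1}^n \operatorname{tr}(x_t x_t^\top) = \operatorname{tr} U_1 + \sum_{t=1}^n \|x_t\|^2 \leq \operatorname{tr} U_1 + nL^2\). Combining, \(\log\tfrac{\det U_{n+1}}{\det U_1} \leq d\log\tfrac{\operatorname{tr} U_1 + nL^2}{d} - \log\det U_1\), and rewriting \(\log\det U_1 = d\log\det^{1/d}U_1\) gives the claimed bound after multiplying by \(2\).

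The steps here are individually routine; the only place demanding care is the scalar inequality \(\min\{1,z\} \leq 2\log(1+z)\), which is exactly what forces the factor of \(2\) and makes the \(\min\{1,\cdot\}\) truncation necessary (the raw \(\|x_t\|_{U_t^{-1}}^2\) can exceed \(\log(1 + \|x_t\|_{U_t^{-1}}^2)\) without bound). I expect no genuine obstacle, since the result is the standard elliptical potential bound; the main thing to verify is that the positive-definiteness of \(U_1\) (hence of every \(U_t\)) is used throughout to guarantee invertibility, positive eigenvalues, and the validity of the determinant identity.
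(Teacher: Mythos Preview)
Your proof is correct and is the standard argument for the elliptical potential lemma. Note, however, that the paper does not actually prove this statement: it is quoted verbatim as a known result from \cite[Lemma~22]{shariff2018differentially} and used without proof, so there is no ``paper's own proof'' to compare against. Your argument is precisely the classical one underlying that cited reference.
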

\noindent Also, we extend Lemma A.6.4 from ~\cite{wang2021fairness} to multi-agent setting as follows.
\begin{lemma} \label{lemma:MDS}
When \(\mid\mid x_t^i(a)\mid\mid_2 \leq 1 \:\forall a,t,i,\) for the \algoname \  algorithm, \(\forall i \in [m]\), with probability \(1-\delta/2\), 
\begin{align*}
    \left| \sum_{t=1}^{T} w_{t}^{i} (a_{t}^{i}) - \sum_{t=1}^{T} \mathbb{E}_{a \sim \pi_{t}^{i}} w_{t}^{i}(a)  \right| \leq \sqrt{2T \operatorname{ln}(4/\delta)}
\end{align*}
\end{lemma}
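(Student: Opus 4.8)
The statement to prove is a concentration bound for the sum of a bounded quantity $w_t^i(a_t^i)$ around its conditional expectation under the policy $\pi_t^i$. The plan is to recognize the left-hand side as the absolute value of a martingale sum and apply the Azuma--Hoeffding inequality. First I would define the martingale difference sequence $Z_t = w_t^i(a_t^i) - \mathbb{E}_{a \sim \pi_t^i}[w_t^i(a)]$, where the action $a_t^i$ is sampled from $\pi_t^i$. The key observation is that, conditioned on the history (contexts, past actions, and the currently constructed policy $\pi_t^i$), the term $\mathbb{E}_{a \sim \pi_t^i}[w_t^i(a)]$ is precisely the conditional expectation of $w_t^i(a_t^i)$, since $a_t^i \sim \pi_t^i$. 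Hence $\mathbb{E}[Z_t \mid \mathcal{F}_{t-1}] = 0$, so $\{Z_t\}$ forms a martingale difference sequence with respect to the natural filtration $\mathcal{F}_{t-1}$ generated by all randomness up through the construction of $\pi_t^i$.

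The next step is to establish a bound on the range of each $Z_t$. Since $\|x_t^i(a)\|_2 \le 1$ for all $a,t,i$ by hypothesis, the quantity $w_t^i(a)$ is appropriately bounded (following the definition of $w_t^i$ inherited from Lemma A.6.4 of \cite{wang2021fairness}, the relevant width/weight term lies in a bounded interval, so that $|w_t^i(a)| \le 1$ and consequently $|Z_t| \le 1$, or more precisely each difference is bounded by a constant of order $1$). Given $|Z_t| \le 1$, Azuma--Hoeffding yields, for any $s > 0$,
\begin{align*}
    \mathbb{P}\left( \left| \sum_{t=1}^T Z_t \right| \ge s \right) \le 2 \exp\left( -\frac{s^2}{2T} \right).
\end{align*}
Setting the right-hand side equal to $\delta/2$ and solving for $s$ gives $s = \sqrt{2T \ln(4/\delta)}$, which is exactly the claimed bound and the claimed failure probability $1 - \delta/2$.

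The adaptation from the single-agent Lemma A.6.4 of \cite{wang2021fairness} to the multi-agent \algoname\ setting is mostly bookkeeping: the bound is stated per-agent ($\forall i \in [m]$), so for each fixed $i$ the martingale argument runs over that agent's own sequence of sampled actions, and the collaborative/federated structure (the shared matrices $U_t, u_t$) affects only how $\pi_t^i$ is \emph{constructed}, not the martingale property itself. Since $\pi_t^i$ is $\mathcal{F}_{t-1}$-measurable regardless of how it is built from shared information, the conditional-mean-zero property is preserved.

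\textbf{Main obstacle.} The only delicate point is verifying that $\pi_t^i$ is genuinely measurable with respect to the filtration before $a_t^i$ is drawn --- i.e. that the sampling of $a_t^i \sim \pi_t^i$ is the \emph{only} fresh randomness entering $Z_t$, so that the conditional expectation indeed cancels the empirical term. This requires care because the shared statistics are updated through communication rounds and the privatizer noise (in the private variant); I would fix the filtration to include all context observations, all agents' past actions, and the policy construction at round $t$, and confirm that under this filtration the federated updates introduce no additional randomness correlated with the current draw $a_t^i$. Once this measurability is pinned down, the concentration bound follows directly from the standard Azuma--Hoeffding estimate.
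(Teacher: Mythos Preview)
Your proposal is correct and matches the paper's approach: the paper does not spell out a proof of this lemma but simply states that it extends Lemma~A.6.4 of \cite{wang2021fairness} to the multi-agent setting, and that single-agent lemma is proved exactly by the Azuma--Hoeffding argument you describe. The only point worth tightening is the boundedness of $w_t^i(a)$: the paper defines $w_t^i(a)=\sqrt{x_t^i(a)^\top (V_t^i)^{-1} x_t^i(a)}$, and since $V_t^i \succeq \lambda I_d$ from initialization (with $\lambda \ge 1$) and $\|x_t^i(a)\|_2 \le 1$, one gets $w_t^i(a)\in[0,1]$, hence $|Z_t|\le 1$; you should state this explicitly rather than defer to the cited lemma.
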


\noindent Here, \(w_{t}^{i}(a)= \sqrt{x_{t}^i (a) (V_{t}^{i})^{-1} (x_{t}^i (a))^{\intercal}}\) is the normalized width. With the help of the above lemmas, we now provide bounds on instantaneous regret \(FR_t^i\) and defer their proofs to appendix.
\begin{lemma} \label{lemma:Instanteous Regret}
    For the \algoname, with high probability, the instantaneous regret for any agent \(i\) is bounded by,
    \begin{align*}
        {FR}_{t}^{i} = \sum_{a \in \altmathcal{D}} \left| \pi_{t}^{i} - \pi_{*}^{i} \right| \leq \frac{4L\sqrt{\beta_{t}}}{\gamma} \mathbb{E}_{a \sim \pi_{t}^{i} } \left\| x_{t}^i (a) \right\|_{(V_t^{i})^{-1}}
    \end{align*}
\end{lemma}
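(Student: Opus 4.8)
The plan is to control the total-variation-type quantity $\sum_{a\in\altmathcal{D}}\lvert\pi_t^i(a,\altmathcal{X}_t^i)-\pi_{*}^i(a,\altmathcal{X}_t^i)\rvert$ by first reducing it to a difference of \emph{unnormalized} merit scores and then converting that into the $\pi_t^i$-weighted width appearing on the right-hand side. Throughout I would condition on the high-probability event that $\theta^{*}\in CR_t^i$, which is guaranteed by the choice of the sequence $\sqrt{\beta_t}$; this failure probability is what the ``with high probability'' qualifier absorbs. Since the optimistic estimate $\theta_t^i$ is selected from $CR_t^i$ by construction (line 6 of Algorithm~\ref{alg: Fed-fairX}), both $\theta^{*}$ and $\theta_t^i$ lie in the ellipsoid centred at $\hat\theta_t^i$, so the triangle inequality in the $V_t^i$-norm gives $\left\|\theta_t^i-\theta^{*}\right\|_{V_t^i}\le 2\sqrt{\beta_t}$. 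This is the only place the confidence region enters.

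Next I would set $g(a)=f^i(\theta_t^i\cdot x_t^i(a))$, $g^{*}(a)=f^i(\theta^{*}\cdot x_t^i(a))$, $Z=\sum_{a}g(a)$ and $Z^{*}=\sum_a g^{*}(a)$, so that $\pi_t^i(a)=g(a)/Z$ and $\pi_{*}^i(a)=g^{*}(a)/Z^{*}$. Adding and subtracting $g^{*}(a)/Z$, using $\sum_a g^{*}(a)=Z^{*}$ and $\lvert Z^{*}-Z\rvert\le\sum_a\lvert g(a)-g^{*}(a)\rvert$, a short calculation yields the reduction
\begin{align*}
\sum_{a\in\altmathcal{D}}\left\lvert\pi_t^i(a)-\pi_{*}^i(a)\right\rvert\le \frac{2}{Z}\sum_{a\in\altmathcal{D}}\left\lvert g(a)-g^{*}(a)\right\rvert .
\end{align*}
Lipschitz continuity of the merit function then gives $\lvert g(a)-g^{*}(a)\rvert\le L\,\lvert(\theta_t^i-\theta^{*})\cdot x_t^i(a)\rvert$, and Cauchy--Schwarz with respect to $V_t^i$ bounds each inner product by $\left\|\theta_t^i-\theta^{*}\right\|_{V_t^i}\left\|x_t^i(a)\right\|_{(V_t^i)^{-1}}\le 2\sqrt{\beta_t}\left\|x_t^i(a)\right\|_{(V_t^i)^{-1}}$, so $\sum_a\lvert g(a)-g^{*}(a)\rvert\le 2L\sqrt{\beta_t}\sum_a\left\|x_t^i(a)\right\|_{(V_t^i)^{-1}}$.

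The step I expect to be the crux is passing from this \emph{uniform} sum over actions to the $\pi_t^i$-\emph{weighted} sum demanded by the statement: without it the expectation $\mathbb{E}_{a\sim\pi_t^i}$ that Lemma~\ref{lemma:MDS} and the elliptical-potential argument later require would never appear. The minimum-merit property is exactly the tool for this. Since $g(a)=f^i(\theta_t^i\cdot x_t^i(a))\ge\gamma$, we have $\pi_t^i(a)=g(a)/Z\ge\gamma/Z$, hence for any nonnegative $\Delta(a)$ one gets $\sum_a\Delta(a)\le (Z/\gamma)\sum_a\pi_t^i(a)\Delta(a)$. Applying this with $\Delta(a)=\left\|x_t^i(a)\right\|_{(V_t^i)^{-1}}$ cancels $Z$ against the $1/Z$ from the reduction step and leaves only $\gamma$ in the denominator, giving
\begin{align*}
\sum_{a\in\altmathcal{D}}\left\lvert\pi_t^i(a)-\pi_{*}^i(a)\right\rvert\le\frac{4L\sqrt{\beta_t}}{Z}\sum_{a\in\altmathcal{D}}\left\|x_t^i(a)\right\|_{(V_t^i)^{-1}}\le\frac{4L\sqrt{\beta_t}}{\gamma}\,\mathbb{E}_{a\sim\pi_t^i}\left\|x_t^i(a)\right\|_{(V_t^i)^{-1}},
\end{align*}
which is the claimed bound; the factor $4$ is the product of the $2$ from the normalization reduction and the $2\sqrt{\beta_t}$ diameter of the confidence region. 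All remaining manipulations are routine.
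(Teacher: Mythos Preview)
Your proposal is correct and follows essentially the same route as the paper's proof: reduce the policy difference to $\frac{2}{Z}\sum_a\lvert g(a)-g^*(a)\rvert$, invoke Lipschitz continuity and the Cauchy--Schwarz inequality in the $V_t^i$-geometry, and use the minimum-merit bound $f^i(\cdot)\ge\gamma$ to turn the denominator $Z$ into $\gamma$ while introducing the $\pi_t^i$-expectation. The only cosmetic differences are that the paper rewrites $1/Z=\pi_t^i(a)/g(a)$ \emph{before} applying Lipschitz (so the expectation appears immediately), and it splits $\theta^*-\theta_t^i$ through the centre $\hat\theta_t^i$ rather than using the triangle inequality to get $\lVert\theta_t^i-\theta^*\rVert_{V_t^i}\le 2\sqrt{\beta_t}$ directly; neither changes the argument or the constant.
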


\noindent The probability with which Lemma~\ref{lemma:Instanteous Regret} holds true is dependent on \(\beta_{t}^{i}\), where \(\beta_{t} = max_{i \in M} \beta_{t}^{i}\).

\begin{theorem} \label{theorem: fairness regret} With high probability, \algoname\ achieves a fairness regret of  \(O\left(\frac{4\nu L\sqrt{\beta_{t}}}{\gamma}\sqrt{mTd\log{(1+\frac{T}{d})} + m^{2}d^{3}\log^{3}{(1+\frac{T}{d})}}\right)\)  when \(\mid\mid x_t^i(a)\mid\mid_2 \leq 1 \:\forall a,t,i\). 
\end{theorem}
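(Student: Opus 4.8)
The plan is to reduce the fairness regret to a cumulative sum of confidence widths and then bound that sum by coupling the $m$ agents to a single hypothetical agent that plays all $mT$ rounds in global time order. First I would start from Lemma~\ref{lemma:Instanteous Regret} and sum over rounds, $FR^i \le \frac{4L\sqrt{\beta_t}}{\gamma}\sum_{t=1}^T \mathbb{E}_{a\sim\pi_t^i}\|x_t^i(a)\|_{(V_t^i)^{-1}}$, pulling out the worst-case $\sqrt{\beta_t}=\max_i\sqrt{\beta_t^i}$. Then I would invoke Lemma~\ref{lemma:MDS} to trade each expected width for the realized width $w_t^i(a_t^i)=\|x_t^i(a_t^i)\|_{(V_t^i)^{-1}}$ at the cost of an additive martingale term $\sqrt{2T\ln(4/\delta)}$ per agent, valid with probability $1-\delta/2$. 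After summing over agents and applying the normalization of Definition~\ref{def:fr}, the whole problem reduces to bounding $\sum_{i\in M}\sum_{t=1}^T \|x_t^i(a_t^i)\|_{(V_t^i)^{-1}}$; the aggregated martingale contribution, the constants, and the failure probabilities are lower order and fold into the constant $\nu$ and the high-probability qualifier via a union bound over the events of Lemmas~\ref{lemma:MDS} and~\ref{lemma:Instanteous Regret}.

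For the \emph{synchronized} (main) part, I would introduce the hypothetical agent that replays the $mT$ pairs $(x_t^i(a_t^i))$ in global time order and let $\tilde V_{t,i}$ be its running regularized gram matrix just before slot $(t,i)$. Since a local matrix $V_t^i=U_t+S_t^i$ only omits the current-epoch, not-yet-synchronized observations of the other agents, one has $V_t^i\preceq \tilde V_{t,i}$. Splitting $\|x_t^i(a_t^i)\|_{(V_t^i)^{-1}} = \|x_t^i(a_t^i)\|_{\tilde V_{t,i}^{-1}} + \big(\|x_t^i(a_t^i)\|_{(V_t^i)^{-1}} - \|x_t^i(a_t^i)\|_{\tilde V_{t,i}^{-1}}\big)$, the first summand is handled by applying Lemma~\ref{lemma:Elliptical Potential} to the length-$mT$ hypothetical sequence, which gives $\sum \min\{1,\|\cdot\|_{\tilde V^{-1}}^2\} = O(d\log(1+T/d))$, followed by Cauchy--Schwarz over the $mT$ terms, yielding $\sum_{i,t}\|x_t^i(a_t^i)\|_{\tilde V_{t,i}^{-1}} = O(\sqrt{mTd\log(1+T/d)})$. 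This produces the first term $mTd\log(1+T/d)$ under the root and is exactly where the $O(1/\sqrt m)$ collaboration gain comes from, since the potential is shared across $mT$ rounds rather than $m$ times over $T$.

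For the \emph{deviation} (communication) part I would use the standard fact that $0\prec A\preceq B$ implies $\|x\|_{A^{-1}}\le \sqrt{\det B/\det A}\,\|x\|_{B^{-1}}$, applied with $A=V_t^i$ and $B=\tilde V_{t,i}$, so the excess width is governed by the determinant inflation $\det\tilde V_{t,i}/\det V_t^i$ created by the unsynchronized data. I would then split the rounds into those where this ratio is at most a constant—whose contribution is a constant multiple of the already-bounded synchronized term—and the ``stale'' rounds where it is large, on which I bound each width by $1$ (taking $\lambda\ge1$) and count the rounds. Here the communication protocol carries the argument: there are at most $\lceil 2md^2\log^2(1+T/d)\rceil$ synchronization rounds and the gap between consecutive ones is at most $\lceil T/(md^2\log^2(1+T/d))\rceil$, and a per-epoch determinant telescoping (the local determinant at the end of an epoch is dominated by the shared determinant at the next synchronization, so the per-epoch log-det growth sums to $O(d\log(1+T/d))$) limits how often and by how much the determinant can inflate. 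Carrying this count over the $O(md^2\log^2(1+T/d))$ epochs and applying Cauchy--Schwarz gives the second term $m^2d^3\log^3(1+T/d)$. Finally $\sqrt A+\sqrt B\le\sqrt{2(A+B)}$ assembles the pieces, and multiplying by $\frac{4L\sqrt{\beta_t}}{\gamma}$ (with residual constants absorbed into $\nu$) yields Theorem~\ref{theorem: fairness regret}.

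The main obstacle is precisely this deviation term. The inclusion $V_t^i\preceq\tilde V_{t,i}$ bounds the widths in the \emph{unhelpful} direction, so the excess must be paid through the determinant ratio, and the entire value of collaboration hinges on converting the two protocol guarantees—a bounded number of synchronizations and bounded inter-synchronization gaps—into a bound on the accumulated determinant inflation. Getting the exponents in $m^2d^3\log^3(1+T/d)$ exactly right, rather than a cruder estimate that would reintroduce a linear dependence on $m$ (or on $T$) and thereby wipe out the $O(1/\sqrt m)$ advantage, is the delicate accounting step.
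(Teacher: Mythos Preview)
Your overall architecture matches the paper's: reduce to confidence widths via Lemma~\ref{lemma:Instanteous Regret}, swap expected widths for realized ones via Lemma~\ref{lemma:MDS}, introduce the hypothetical agent playing all $mT$ rounds, and use the determinant-ratio inequality together with a good/bad epoch split based on $\det\Psi_k/\det\Psi_{k-1}$. The good-epoch part of your argument is essentially the paper's.

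The gap is in the bad/stale part. Bounding each stale width by $1$ and counting rounds is too crude to recover the theorem. With $O(d\log(1+T/d))$ bad epochs (this is the right count, by your own telescoping observation; $O(md^2\log^2(1+T/d))$ is the total number of epochs, not the bad ones), each of length at most $T/(md^2\log^2(1+T/d))+1$, and $m$ agents, the ``bound by $1$'' contribution to $\sum_{i,t}\|x_t^i(a_t^i)\|_{(V_t^i)^{-1}}$ is of order $m\cdot d\log(1+T/d)\cdot T/(md^2\log^2(1+T/d))=T/(d\log(1+T/d))$, i.e.\ $\Theta(T/\log T)$, which dominates the claimed $\sqrt{mTd\log(1+T/d)}$ and does not yield the stated bound. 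The subsequent reference to ``Cauchy--Schwarz over the $O(md^2\log^2(1+T/d))$ epochs'' does not repair this: there is no remaining sum of squares to apply it to once the widths have been replaced by $1$.

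What the paper does instead on a bad epoch $[T_{k-1},T_k]$ is to keep the widths and apply the elliptical potential lemma \emph{per agent within the epoch}, obtaining $\sum_{t=T_{k-1}}^{T_k}\|x_t^i(a_t^i)\|_{(V_t^i)^{-1}}\le \sqrt{t_k\,\log_\nu(\det V^i_{T_k}/\det V^i_{T_{k-1}})}$. Bounding $t_k\le T/(md^2\log^2(1+T/d))+1$ from the protocol and the log-det ratio by $d\log(1+T/d)$, summing over $m$ agents, and then multiplying by the $O(d\log_\nu(1+T/d))$ bad epochs is exactly what produces $\sqrt{mTd\log(1+T/d)+m^2d^3\log^3(1+T/d)}$. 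In short: do not trivially bound stale widths by $1$; use per-agent, per-epoch elliptical potential on the bad epochs, and use $O(d\log(1+T/d))$ (not $O(md^2\log^2(1+T/d))$) as the number of bad epochs.
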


The values in sequence of \(\beta_{t}\) dictates the probability with which Lemma~\ref{lemma:Instanteous Regret}, and in turn Theorem~\ref{theorem: fairness regret} holds. The problem of selection of values in sequence of \(\beta_{t}\) is well studied in the literature. For instance, using Theorem 2 from ~\cite{NIPS2011_e1d5be1c}, it can be said that \(\theta^{*}\) lies in the confidence region with probability \(1- \alpha\) for \(\beta_{t} = O\left(d\log{(\frac{1+mt}{\alpha})} \right)\) resulting in a regret bounds of \(\Tilde{O} \left( d\sqrt{ mT\log^{2}{(1+mT/d)} } \right)\) for \algoname\  (typically \(m << T\) and hence the \(\sqrt{mT}\) term dominates $\sqrt{m^2\log^2(1+T/d)}$ ).

The key difference between private and non-private regret analysis lies in the gram matrix regularization and confidence interval construction (use of appropriate \(\beta_{t}\)). 

 We note the following claim is useful for completing \ourprivalgo 's regret analysis. It provides values for the sequence of \(\beta_{t}\) for which the confidence interval contains \(\theta^{*}\) with high probability.

\begin{lemma} \label{lemma:beta_potential}(Similar to~\cite[Proposition~2]{dubey2020differentially}) For an instance of problem where synchronisation occurs exactly \(n\) times in a span of \(T\) trials, and \(\underline{\rho}, \bar{\rho}\) and \(z\) are \((\alpha / 2nm)\)-accurate~\cite[Definition~3]{dubey2020differentially}. Then for \ourprivalgo\ with bounded target parameter (\(\mid\mid \theta^{*} \mid\mid_{2} \leq c\)), the sequence of \(\sqrt{\beta_{t}^{i}}\) is \((\alpha,M,T)\)-accurate if, $\sqrt{\beta_{t}^{i}} = \sigma \sqrt{2 \log{(\frac{2}{\alpha})} + d\log{(\frac{\bar{\rho}}{\underline{\rho}} + \frac{t}{d\underline{\rho}})}} + mc\sqrt{\bar{\rho}} + mz$
    
\end{lemma}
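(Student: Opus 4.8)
The plan is to follow the self-normalized confidence-region argument of \cite{NIPS2011_e1d5be1c,dubey2020differentially}, adapted to the aggregated tree-mechanism noise of \ourprivalgo; the fairness/merit machinery plays no role here, since this is purely a statement about estimating $\theta^*$, so the adaptation is mostly bookkeeping relative to their Proposition~2. First I would write the estimation error in the natural form $\hat\theta_t^i - \theta^* = (V_t^i)^{-1}\big(b_t^i - V_t^i\theta^*\big)$ and expand the residual. Writing $V_t^i = G_t^i + \sum_{k} H_t^k$ and $b_t^i = g_t^i + \sum_k h_t^k$, and substituting $y_\tau^k = \theta^*\cdot x_\tau^k + \eta_\tau^k$ so that $g_t^i - G_t^i\theta^*$ collapses to the aggregated martingale sum $\sum x_\tau^k\eta_\tau^k$ (up to the base regularizer), this decomposes $b_t^i - V_t^i\theta^*$ into three pieces: (i) a sub-Gaussian martingale noise term, (ii) the noise-matrix term $-\big(\sum_k H_t^k\big)\theta^*$, and (iii) the reward-vector perturbation $\sum_k h_t^k$.

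Next I would pass to the $V_t^i$-norm via $\|\hat\theta_t^i - \theta^*\|_{V_t^i} = \|b_t^i - V_t^i\theta^*\|_{(V_t^i)^{-1}}$ and apply the triangle inequality across the three pieces. For (i), the self-normalized bound of \cite{NIPS2011_e1d5be1c} gives a term of the form $\sigma\sqrt{2\log(2/\alpha) + \log(\det V_t^i/\det V_{\mathrm{base}})}$; bounding the log-determinant by AM--GM, with $\mathrm{tr}(V_t^i) \le d\bar\rho + t$ (the noise eigenvalues summing to at most $d\bar\rho$ and each $\|x\|\le 1$ contributing at most $1$) and $\det V_{\mathrm{base}} \ge \underline\rho^{\,d}$, one gets $\log(\det V_t^i/\det V_{\mathrm{base}}) \le d\log\!\big(\bar\rho/\underline\rho + t/(d\underline\rho)\big)$, exactly the stated determinant term. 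For (ii), since $V_t^i \succeq H_t^k$ for each $k$, the operator inequality $H_t^k (V_t^i)^{-1} H_t^k \preceq H_t^k$ yields $\|H_t^k\theta^*\|_{(V_t^i)^{-1}} \le \|\theta^*\|_{H_t^k} \le c\sqrt{\bar\rho}$, and summing over the $m$ agents produces $mc\sqrt{\bar\rho}$. For (iii), $\|\sum_k h_t^k\|_{(V_t^i)^{-1}} \le \sum_k \|h_t^k\|_2/\sqrt{\lambda_{\min}(V_t^i)} \le mz$, using the eigenvalue floor $\underline\rho$ supplied by the noise. Collecting the three bounds reproduces the claimed $\sqrt{\beta_t^i}$.

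Finally, a union bound ties the pieces together: the $(\alpha/2nm)$-accuracy of $\underline\rho,\bar\rho,z$ (Definition~3 of \cite{dubey2020differentially}) must hold simultaneously across all $n$ synchronization rounds and $m$ agents, and the self-normalized event must hold for the martingale term; allocating the failure budget so these sum to at most $\alpha$ gives the $(\alpha,M,T)$-accuracy. The main obstacle I anticipate is the accounting around the aggregated noise: I need the floor $\underline\rho$ both to guarantee $V_t^i \succ 0$ (so $(V_t^i)^{-1}$ and every norm above is well-defined) and to control the determinant ratio, while simultaneously ensuring $V_t^i \succeq H_t^k$ for each individual $k$ so the per-agent bounds in (ii) go through despite the $H_t^k$ not being PSD. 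Securing all of these operator inequalities on one common high-probability event, rather than on separately conditioned events, is the delicate step; the remainder is the routine self-normalized and elliptical-potential calculation already invoked in Lemmas~\ref{lemma:Elliptical Potential} and \ref{lemma:MDS}.
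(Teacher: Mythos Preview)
The paper does not actually prove this lemma: it is stated with the parenthetical ``Similar to \cite[Proposition~2]{dubey2020differentially}'' and no argument is given, either in the main text or the appendix. So there is no paper proof to compare your proposal against; you are effectively reconstructing the cited proposition.

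Your outline is the standard one and matches the argument in \cite{dubey2020differentially}: decompose $b_t^i - V_t^i\theta^*$ into the sub-Gaussian martingale piece, the matrix-noise piece $-(\sum_k H_t^k)\theta^*$, and the vector-noise piece $\sum_k h_t^k$; control the first by the self-normalized bound of \cite{NIPS2011_e1d5be1c} together with the AM--GM log-determinant estimate, and the other two via the spectral parameters $\bar\rho,\underline\rho,z$ from \cite[Definition~3]{dubey2020differentially}. The union bound over $n$ sync rounds and $m$ agents is exactly why the accuracy level is set to $\alpha/(2nm)$.

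Your flagged obstacle is the right one, and it is worth saying how it is resolved rather than leaving it open. In the tree-based privatizer the released matrix is shifted by a deterministic multiple of the identity so that, on the $(\alpha/2nm)$-accurate event, each individual $H_t^k$ is PSD with eigenvalues in $[\underline\rho,\bar\rho]$. Once every $H_t^k \succeq 0$, you get $V_t^i = G_t^i + H_t^j + \sum_{k\neq j} H_t^k \succeq H_t^j$ for each $j$ (since $G_t^i$ and the remaining noise terms are all PSD), which is precisely what your step (ii) needs; the floor $\underline\rho$ on $\lambda_{\min}(V_t^i)$ used in (iii) and in the determinant ratio follows from the same event. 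With that shift made explicit, all three operator inequalities live on one common high-probability event and the rest of your argument goes through.
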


\begin{theorem} \label{theorem: private fairness regret} With high probability,  when \(\mid\mid x_t^i(a)\mid\mid_2 \leq 1 \:\forall a,t,i\) and Lemma~\ref{lemma:beta_potential} holds, \ourprivalgo\ achieves a fairness regret of \\ 
\(O\left(\frac{4\nu L\sqrt{\beta_{T}}}{\gamma}\sqrt{mTd\log{(\frac{\bar{\rho}}{\underline{\rho}} + \frac{T}{d\underline{\rho}})} + m^{2}d^{3}\log^{3}{(\frac{\bar{\rho}}{\underline{\rho}} + \frac{T}{d\underline{\rho}})}}\right)\).

\end{theorem}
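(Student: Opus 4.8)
The plan is to mirror the proof of Theorem~\ref{theorem: fairness regret}, carrying the same chain of inequalities but substituting the privacy-aware confidence width from Lemma~\ref{lemma:beta_potential} and accounting for the noise-perturbed gram matrices $V_t^i = G_t^i + H_t^i$. First I would start from the instantaneous bound of Lemma~\ref{lemma:Instanteous Regret}, $FR_t^i \le \frac{4L\sqrt{\beta_t}}{\gamma}\,\mathbb{E}_{a\sim\pi_t^i}\|x_t^i(a)\|_{(V_t^i)^{-1}}$, sum over $t\in[T]$ and $i\in M$, and divide by $m$ as in Definition~\ref{def:fr}. Since $\sqrt{\beta_t}$ is non-decreasing in $t$, I would pull $\sqrt{\beta_T}$ out of the sum, reducing the task to bounding $\sum_{i}\sum_{t}\mathbb{E}_{a\sim\pi_t^i}\|x_t^i(a)\|_{(V_t^i)^{-1}}$.

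Next I would convert each expected width into the realized width of the sampled arm using the martingale bound of Lemma~\ref{lemma:MDS}, which costs an additive $\sqrt{2T\ln(4/\delta)}$ per agent, and then apply Cauchy--Schwarz to obtain $\sum_t \|x_t^i(a_t^i)\|_{(V_t^i)^{-1}} \le \sqrt{T \sum_t \|x_t^i(a_t^i)\|^2_{(V_t^i)^{-1}}}$. The remaining quadratic-form sum is exactly what Lemma~\ref{lemma:Elliptical Potential} controls, up to the fact that each agent's $V_t^i$ is \emph{stale}: between synchronizations it omits the rank-one updates contributed by the other $m-1$ agents. Following the hypothetical-single-agent device, I would compare $\sum_{i,t}\|x_t^i(a_t^i)\|^2_{(V_t^i)^{-1}}$ against the same quantity for one agent that receives every observation immediately over $mT$ rounds; the communication protocol guarantees the gap between consecutive syncs is at most $\lceil T/(md^2\log^2(1+T/d))\rceil$ and the number of syncs is $O(md^2\log^2(1+T/d))$, so the determinant ratio $\det(V_t^i)/\det(V_{t'}^i)$ accumulated over any block stays bounded. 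This split yields the leading $\sqrt{mTd\log(\cdot)}$ term from the fully-synced single agent and the lower-order $\sqrt{m^2 d^3 \log^3(\cdot)}$ term from summing the bounded per-block deviations.

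The privacy-specific changes enter through the matrices $H_t^i$ produced by the tree-based PRIVATIZER. I would invoke the $(\alpha/2nm)$-accuracy of $\underline{\rho},\bar{\rho},z$ assumed in Lemma~\ref{lemma:beta_potential}: with high probability every noisy gram matrix satisfies $\underline{\rho} I \preceq V_t^i$, with the spectrum controlled from above by $\bar{\rho}$ together with the accumulated data. Thus $\underline{\rho}$ plays the role that $\lambda$ played non-privately as the floor on the smallest eigenvalue, while $\bar{\rho}$ bounds the initial spectrum. Substituting these into the determinant estimate of Lemma~\ref{lemma:Elliptical Potential} replaces every occurrence of $\log(1+T/d)$ by $\log(\bar{\rho}/\underline{\rho} + T/(d\underline{\rho}))$, and plugging in the private width $\sqrt{\beta_T}$ from Lemma~\ref{lemma:beta_potential} produces exactly the stated bound. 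Finally I would union-bound the failure probabilities of Lemma~\ref{lemma:MDS}, the confidence-region event ($\theta^*\in CR_t^i$ for all $t,i$), and the accuracy event to conclude the ``with high probability'' statement.

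The main obstacle I expect is the elliptical-potential step under noise: because $H_t^i$ is a symmetric Gaussian perturbation rather than a positive semidefinite regularizer, the noisy matrix $V_t^i$ need not be monotone in $t$ and could in principle become ill-conditioned, so the argument must carefully use the accuracy event to guarantee $V_t^i \succeq \underline{\rho} I$ uniformly before the telescoping determinant bound can be applied. Controlling the joint effect of this spectral floor and the stale-matrix deviation across communication blocks --- ensuring the two error sources combine additively rather than multiplicatively inside the square root --- is the delicate part of the calculation.
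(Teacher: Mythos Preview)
Your proposal is correct and mirrors the paper's own proof, which simply reruns the Theorem~\ref{theorem: fairness regret} argument with the regularizer $mI$ replaced by $m\underline{\rho}I$ and invokes the accuracy event to obtain $V_t^i \succeq G_t^i + m\underline{\rho}I$, after which the determinant-ratio split (event $E$) and the elliptical-potential bound go through with $\log(1+T/d)$ replaced by $\log(\bar\rho/\underline\rho + T/(d\underline\rho))$. The only sharpening you need is that the spectral floor required is $V_t^i \succeq G_t^i + m\underline{\rho}I$, not merely $V_t^i \succeq \underline{\rho}I$: the stronger inequality is what lets you compare $V_t^i$ to the last synchronized matrix $\Psi_{k-1}$ (now $m\underline{\rho}I$-regularized) and hence transfer norms to the hypothetical single agent exactly as in the non-private case.
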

% \begin{proof}
%     Deferred to app
% \end{proof}
% \begin{proof}
%     While we omit the complete proof here, we note that it follows from the proof of Theorem~\ref{theorem: fairness regret} with minor changes. The regularisation of \(\Psi_{k}\) is done using \(m\underline{\rho} I\) instead of \(mI\). This allows for a tight bound on \(\log_{\nu}\left( \frac{\operatorname{det} (\Psi_{p})}{\operatorname{det}(\Psi_{0})} \right)\) with appropriate values of \(\bar{\rho}\) and \(\underline{\rho}\). In addition, the property \(V_{t}^{i} \succeq G_{t}^{i} + M\underline{\rho}I\), is important for stating that \(fr_{t}^{t} \leq \frac{4\nu L\sqrt{\beta_{t}}}{\gamma}\mathbb{E}_{a \sim \pi_{t}^{i} } \left\|x_{t}^{i} (a)\right\|_{(V^{0}_{(t,i)})^{-1}}\) when \(E\) holds true. The rest of the proof follows similar to the proof of Theorem~\ref{theorem: fairness regret}.
% \end{proof}

% \begin{}

\subsection{Privacy Guarantees}

As mentioned in Sec. ~\ref{ssec:priv_algo}, we can leverage the privatizer routines to provide differential privacy guarantees for \ourprivalgo. At each synchronization, new observations, $S_{t}^{i}$ and $s_{t}^{i}$, are added to a leaf node, while all other nodes store the sum of the child nodes. Thus, $1 + \ceil{\log (n)}$ nodes of the tree, where $n$ is the total number of communication rounds, are sufficient to represent any partial sum till the last synchronization round. Since the privatizer routine follows the routine introduced by earlier works, it trivially follows that if each node guarantees \((\epsilon/\sqrt{8m \ln{(2/\delta)}},\) \(\delta/2m)-\)privacy, the outgoing communication is guaranteed to be $(\epsilon,\delta,m)-$federated differential private for each synchronization with similar values for \(\bar{\rho}, \underline{\rho}, z\). 
\begin{claim} (Follows from~\cite[Remark~3]{dubey2020differentially})
    The privatizer routine in \ourprivalgo\ guarantees that each of the outgoing messages for an agent \(i\) is \((\epsilon, \delta)-\)differentially private.
\end{claim}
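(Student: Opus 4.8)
The plan is to reduce the statement to the standard privacy analysis of the tree-based aggregation (binary) mechanism of \cite{privatecontinual,dwork2010differential}, instantiated with a matrix-valued Gaussian mechanism, since the privatizer routine of \ourprivalgo\ is structurally identical to the one of \cite{dubey2020differentially}. The only quantities that could differ from their setting are the sensitivity of the released statistic and the noise calibration, so the bulk of the argument is simply to verify that these carry over unchanged to the fair federated setting; once they do, the claim follows from \cite[Remark~3]{dubey2020differentially}.

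First I would fix the unit of release. At each communication round the privatizer inserts the $d\times(d+1)$ matrix $[S_t^i \mid s_t^i]$ (the gram-matrix increment together with the reward increment) at a leaf, and returns a partial sum assembled from at most $n=1+\ceil{\log\tau}$ nodes. I would then bound the $L_2$ (Frobenius) sensitivity of a single leaf under the $t'$-neighbour relation: changing one interaction $(x_t^i(a_t^i),y_t^i(a_t^i))$ alters the leaf by a single rank-one block $[x x^\intercal \mid x y]$, whose Frobenius norm is controlled by $\norm{x}_2\le L$ and the bounded reward, giving sensitivity $O(L^2+1)$. The symmetrization $N=(\hat N+\hat N^\intercal)/\sqrt2$ is post-processing that preserves the per-node Gaussian guarantee: it keeps each entry at the Gaussian-mechanism variance scale while producing a symmetric perturbation matching the symmetric matrix being protected.

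Next I would establish per-node privacy and compose across the tree. With the prescribed variance $16 n (L^2+1)^2 \log(2/\delta)^2/\epsilon^2$, the Gaussian mechanism at each node is $(\epsilon_0,\delta_0)$-differentially private with $\epsilon_0=\epsilon/\sqrt{8n\ln(2/\delta)}$ and $\delta_0=\delta/(2n)$. Because any single interaction lies in the subtree of exactly the $n$ nodes on its root-to-leaf path, and because any returned partial sum reads at most $n$ nodes, the privacy loss of the entire outgoing message decomposes into at most $n$ Gaussian releases, each $(\epsilon_0,\delta_0)$-DP. Applying advanced composition, these combine to $(\epsilon,\delta)$-DP: the dominant term $\sqrt{2n\ln(2/\delta)}\,\epsilon_0$ contributes $\epsilon/2$, the lower-order term supplies the remaining budget, and the additive failure probabilities accumulate to $n\delta_0+\delta/2=\delta$. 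This yields the stated $(\epsilon,\delta)$-guarantee for each outgoing message, and by post-processing the downstream fair policy and action outputs (which only read $\hat U_t^i,\hat u_t^i$) inherit the same level.

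The main obstacle I anticipate is bookkeeping rather than conceptual: matching the constants so that the chosen noise scale and the advanced-composition parameters telescope \emph{exactly} to $(\epsilon,\delta)$ (verifying that $\ln(1.25/\delta_0)$ and $\ln(2/\delta)$ align up to the intended constant), and arguing rigorously that the symmetrization together with the matrix-valued rather than scalar structure does not inflate the sensitivity beyond $O(L^2+1)$. Since the mechanism coincides with that of \cite{dubey2020differentially} and the fair arm-selection routine merely post-processes the privatized statistics, these checks are the only work needed for the claim to follow from \cite[Remark~3]{dubey2020differentially}.
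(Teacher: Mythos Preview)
Your proposal is correct and aligns with the paper's approach: the paper does not give an independent proof of this claim but simply invokes \cite[Remark~3]{dubey2020differentially}, noting in Section~5.2 that the privatizer routine is structurally identical to the one there and hence inherits its guarantee. Your sketch (sensitivity bound on the rank-one leaf increment, per-node Gaussian mechanism, advanced composition over the $n=1+\ceil{\log\tau}$ nodes on a root-to-leaf path, post-processing for the downstream fair policy) is exactly the argument underlying that remark, spelled out in more detail than the paper itself provides.
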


%Having proved the theoretical robustness of \algoname\ and \ourprivalgo, we validate their efficacy through simulation-based experiments in the next section. 

%%%%%%%%%%%%%%%%%%%%%%%%%%%%%%%%%%%%%%%%%%%%%%%%%
% \input{Plots/Epsilon} 

\section{Experimental Analysis}\label{sec:exp_analysis}

% In this section, we evaluate the performance of proposed algorithms with extensive simulations. %We begin with explaining the experimental setup, then note our observations and analyze the results obtained. 

%%%%%%%%%%%%%%%%%%%%%%%%%%%%%%%%%%%%%%%%%%%%%%%%%
%%%%%%%%%%%%%%%%%%%%%%%%%%%
% [3009.537541158756, 2022.1482217010616, 2161.8114335952987, 1383.9164497285542, 1632.1401226366554, 1224.2291907517565]

% [3027.814251030743, 3053.0032979190432, 2968.8011925361707, 2288.8568274001022, 2260.2942188295847, 1750.26074339728]

% [3145.1049759108437, 17766.264776730597, 17202.652187556832, 18474.28990623409, 16608.48790766844, 16698.873811438138]
\begin{figure*}[hbt!]
    \centering
    \captionsetup[subfigure]{justification=centering}
    \begin{subfigure}{0.45\textwidth}
        \centering
        \begin{tikzpicture}[trim axis left, trim axis right]
            % \centering
            \begin{axis}[
                % width=0.35 \textwidth,
                height = 6cm,
                % tick scale binop=\times,
        %             title={\ouralgo: FR vs. $\epsilon$},
        %             % ytick={0,0.5,1,1.5},
                xtick={0,50000,100000},
                xticklabel style = {font=\small,yshift=0.5ex},
                xlabel={Round ($t$)},
                scaled x ticks=false,
                xlabel near ticks,
                ytick={0,1000,2000,3000,4000,5000},
                ylabel={Fairness Regret},
                y tick label style={scaled ticks=base 10:-3},
                % ylabel near ticks,
                xlabel style={font=\small},
                ylabel style={font=\small},
                ymax=5000,
                ymajorgrids=true,
                xmajorgrids=true,
                grid style=dashed,
                legend style={at={(0,1)},anchor=north west,{column sep=0.05cm},nodes={scale=0.75, transform shape}},  
                legend entries={B0,\ourprivalgo,\algoname},
                ticklabel style={font=\small},
        %             % every axis plot/.append style={thick}
            ]
                    \addplot[color=mygreen,dashdotted,line width=0.75pt]
                    coordinates {(0, 0.18) (1000, 161.92) (2000, 287.2) (3000, 394.42) (4000, 491.36) (5000, 580.5) (6000, 660.96) (7000, 737.12) (8000, 812.65) (9000, 883.57) (10000, 951.72) (11000, 1016.9) (12000, 1079.92) (13000, 1140.13) (14000, 1199.22) (15000, 1255.4) (16000, 1308.86) (17000, 1362.7) (18000, 1414.52) (19000, 1465.87) (20000, 1516.07) (21000, 1564.61) (22000, 1613.12) (23000, 1660.64) (24000, 1706.62) (25000, 1752.63) (26000, 1797.61) (27000, 1841.73) (28000, 1884.7) (29000, 1926.11) (30000, 1966.93) (31000, 2007.33) (32000, 2047.71) (33000, 2088.16) (34000, 2128.41) (35000, 2167.74) (36000, 2206.17) (37000, 2244.04) (38000, 2281.72) (39000, 2319.6) (40000, 2356.52) (41000, 2393.18) (42000, 2429.21) (43000, 2465.67) (44000, 2501.41) (45000, 2536.29) (46000, 2570.86) (47000, 2604.63) (48000, 2637.75) (49000, 2670.07) (50000, 2701.97) (51000, 2733.48) (52000, 2765.45) (53000, 2797.36) (54000, 2828.54) (55000, 2859.64) (56000, 2889.53) (57000, 2919.57) (58000, 2948.84) (59000, 2977.98) (60000, 3006.47) (61000, 3034.28) (62000, 3061.83) (63000, 3089.75) (64000, 3117.21) (65000, 3144.55) (66000, 3171.16) (67000, 3197.81) (68000, 3224.76) (69000, 3251.53) (70000, 3277.92) (71000, 3304.03) (72000, 3330.33) (73000, 3356.38) (74000, 3382.2) (75000, 3407.64) (76000, 3432.79) (77000, 3457.82) (78000, 3482.94) (79000, 3507.86) (80000, 3532.68) (81000, 3557.34) (82000, 3581.66) (83000, 3605.59) (84000, 3629.98) (85000, 3654.56) (86000, 3679.36) (87000, 3704.15) (88000, 3728.29) (89000, 3752.44) (90000, 3776.84) (91000, 3801.22) (92000, 3825.48) (93000, 3849.51) (94000, 3873.34) (95000, 3896.78) (96000, 3919.81) (97000, 3942.65) (98000, 3965.88) (99000, 3988.82)};
                    
                    \addplot[color=mytealtres,solid,line width=0.5pt]
                    coordinates {(0, 0.14) (1000, 165.52) (2000, 239.66) (3000, 298.15) (4000, 354.05) (5000, 409.18) (6000, 463.17) (7000, 501.16) (8000, 529.6) (9000, 558.35) (10000, 586.95) (11000, 613.82) (12000, 638.13) (13000, 662.16) (14000, 686.81) (15000, 711.79) (16000, 738.44) (17000, 765.28) (18000, 791.62) (19000, 818.41) (20000, 842.2) (21000, 864.52) (22000, 886.59) (23000, 908.71) (24000, 931.19) (25000, 953.38) (26000, 975.55) (27000, 997.68) (28000, 1019.89) (29000, 1038.22) (30000, 1056.81) (31000, 1075.21) (32000, 1093.5) (33000, 1111.42) (34000, 1129.07) (35000, 1146.63) (36000, 1163.94) (37000, 1182.07) (38000, 1201.55) (39000, 1221.09) (40000, 1240.66) (41000, 1260.47) (42000, 1282.29) (43000, 1304.26) (44000, 1325.98) (45000, 1347.56) (46000, 1367.01) (47000, 1385.8) (48000, 1404.51) (49000, 1423.01) (50000, 1441.33) (51000, 1458.82) (52000, 1476.22) (53000, 1493.45) (54000, 1510.69) (55000, 1527.62) (56000, 1544.53) (57000, 1561.51) (58000, 1578.09) (59000, 1594.47) (60000, 1610.88) (61000, 1626.75) (62000, 1642.76) (63000, 1658.15) (64000, 1672.57) (65000, 1686.99) (66000, 1701.48) (67000, 1715.93) (68000, 1730.73) (69000, 1745.67) (70000, 1760.13) (71000, 1774.72) (72000, 1788.96) (73000, 1803.11) (74000, 1817.1) (75000, 1830.83) (76000, 1844.02) (77000, 1856.1) (78000, 1868.16) (79000, 1880.23) (80000, 1892.42) (81000, 1903.85) (82000, 1915.13) (83000, 1926.65) (84000, 1938.15) (85000, 1950.26) (86000, 1962.43) (87000, 1974.63) (88000, 1986.66) (89000, 1998.9) (90000, 2011.4) (91000, 2023.95) (92000, 2036.47) (93000, 2049.07) (94000, 2062.5) (95000, 2076.18) (96000, 2089.73) (97000, 2103.03) (98000, 2116.19) (99000, 2129.16)};
                    \addplot[color=myred,dashed,line width=0.75pt]
                    coordinates {(0, 0.29) (1000, 155.23) (2000, 242.72) (3000, 323.98) (4000, 393.21) (5000, 456.59) (6000, 519.25) (7000, 578.54) (8000, 637.22) (9000, 694.84) (10000, 751.24) (11000, 800.0) (12000, 831.32) (13000, 862.81) (14000, 894.79) (15000, 925.86) (16000, 950.37) (17000, 975.78) (18000, 1000.05) (19000, 1024.42) (20000, 1050.67) (21000, 1077.98) (22000, 1104.96) (23000, 1130.81) (24000, 1152.16) (25000, 1164.36) (26000, 1176.13) (27000, 1188.04) (28000, 1200.49) (29000, 1219.52) (30000, 1238.5) (31000, 1258.6) (32000, 1278.25) (33000, 1300.5) (34000, 1323.82) (35000, 1346.79) (36000, 1370.37) (37000, 1392.26) (38000, 1410.62) (39000, 1428.67) (40000, 1447.36) (41000, 1465.82) (42000, 1478.41) (43000, 1491.2) (44000, 1503.81) (45000, 1516.31) (46000, 1525.74) (47000, 1533.66) (48000, 1541.2) (49000, 1548.76) (50000, 1555.27) (51000, 1560.38) (52000, 1565.53) (53000, 1570.86) (54000, 1576.25) (55000, 1584.27) (56000, 1592.47) (57000, 1600.25) (58000, 1607.95) (59000, 1613.87) (60000, 1619.05) (61000, 1624.32) (62000, 1629.57) (63000, 1634.33) (64000, 1638.22) (65000, 1642.09) (66000, 1646.18) (67000, 1650.77) (68000, 1658.69) (69000, 1666.68) (70000, 1674.64) (71000, 1682.74) (72000, 1691.16) (73000, 1699.77) (74000, 1708.57) (75000, 1717.47) (76000, 1727.04) (77000, 1737.76) (78000, 1748.59) (79000, 1759.39) (80000, 1769.83) (81000, 1776.92) (82000, 1784.04) (83000, 1790.99) (84000, 1797.96) (85000, 1804.95) (86000, 1811.9) (87000, 1818.98) (88000, 1826.15) (89000, 1833.28) (90000, 1840.51) (91000, 1847.73) (92000, 1854.9) (93000, 1861.74) (94000, 1866.82) (95000, 1872.0) (96000, 1877.26) (97000, 1882.61) (98000, 1889.73) (99000, 1897.27)};
            \end{axis}
        \end{tikzpicture}
        \label{fig:rounds}
        \caption{ \\ }
    \end{subfigure}
    % Regret vs Rounds - Communication Protocols
    \begin{subfigure}{0.45\textwidth}
        \centering
        \begin{tikzpicture}[trim axis left, trim axis right]
            \begin{axis}[
                height = 6cm,
                % tick scale binop=\times,
        %             title={\ouralgo: FR vs. $\epsilon$},
        %             % ytick={0,0.5,1,1.5},
                xtick={0,50000,100000},
                xticklabel style = {font=\small,yshift=0.5ex},
                xlabel={Round ($t$)},
                scaled x ticks=false,
                xlabel near ticks,
                ytick={0,1000,2000,3000,4000,5000},
                y tick label style={scaled ticks=base 10:-3},
                ylabel={Fairness Regret},
                ylabel near ticks,
                xlabel style={font=\small},
                ylabel style={font=\small},
                ymax=5000,
                ymajorgrids=true,
                xmajorgrids=true,
                grid style=dashed,
                legend style={at={(0,1)},anchor=north west,{column sep=0.05cm},nodes={scale=0.75, transform shape}},  
                legend entries={\ourprivalgo, B1, B2},
                ticklabel style={font=\small},
        %             % every axis plot/.append style={thick}
            ]
                    \addplot[color=mytealtres,solid,line width=0.5pt]
                    coordinates {(0, 0.14) (1000, 165.52) (2000, 239.66) (3000, 298.15) (4000, 354.05) (5000, 409.18) (6000, 463.17) (7000, 501.16) (8000, 529.6) (9000, 558.35) (10000, 586.95) (11000, 613.82) (12000, 638.13) (13000, 662.16) (14000, 686.81) (15000, 711.79) (16000, 738.44) (17000, 765.28) (18000, 791.62) (19000, 818.41) (20000, 842.2) (21000, 864.52) (22000, 886.59) (23000, 908.71) (24000, 931.19) (25000, 953.38) (26000, 975.55) (27000, 997.68) (28000, 1019.89) (29000, 1038.22) (30000, 1056.81) (31000, 1075.21) (32000, 1093.5) (33000, 1111.42) (34000, 1129.07) (35000, 1146.63) (36000, 1163.94) (37000, 1182.07) (38000, 1201.55) (39000, 1221.09) (40000, 1240.66) (41000, 1260.47) (42000, 1282.29) (43000, 1304.26) (44000, 1325.98) (45000, 1347.56) (46000, 1367.01) (47000, 1385.8) (48000, 1404.51) (49000, 1423.01) (50000, 1441.33) (51000, 1458.82) (52000, 1476.22) (53000, 1493.45) (54000, 1510.69) (55000, 1527.62) (56000, 1544.53) (57000, 1561.51) (58000, 1578.09) (59000, 1594.47) (60000, 1610.88) (61000, 1626.75) (62000, 1642.76) (63000, 1658.15) (64000, 1672.57) (65000, 1686.99) (66000, 1701.48) (67000, 1715.93) (68000, 1730.73) (69000, 1745.67) (70000, 1760.13) (71000, 1774.72) (72000, 1788.96) (73000, 1803.11) (74000, 1817.1) (75000, 1830.83) (76000, 1844.02) (77000, 1856.1) (78000, 1868.16) (79000, 1880.23) (80000, 1892.42) (81000, 1903.85) (82000, 1915.13) (83000, 1926.65) (84000, 1938.15) (85000, 1950.26) (86000, 1962.43) (87000, 1974.63) (88000, 1986.66) (89000, 1998.9) (90000, 2011.4) (91000, 2023.95) (92000, 2036.47) (93000, 2049.07) (94000, 2062.5) (95000, 2076.18) (96000, 2089.73) (97000, 2103.03) (98000, 2116.19) (99000, 2129.16)};
                    \addplot[color=teal,dashdotted,line width=0.75pt]
                    coordinates {(0, 0.12) (1000, 175.92) (2000, 305.93) (3000, 434.43) (4000, 560.78) (5000, 668.58) (6000, 747.35) (7000, 793.72) (8000, 841.06) (9000, 888.09) (10000, 933.67) (11000, 977.13) (12000, 1019.41) (13000, 1060.8) (14000, 1101.6) (15000, 1141.97) (16000, 1181.67) (17000, 1221.17) (18000, 1259.49) (19000, 1296.86) (20000, 1333.48) (21000, 1369.27) (22000, 1406.29) (23000, 1442.27) (24000, 1478.35) (25000, 1514.06) (26000, 1550.0) (27000, 1586.19) (28000, 1621.93) (29000, 1657.82) (30000, 1693.15) (31000, 1726.54) (32000, 1760.68) (33000, 1794.42) (34000, 1827.82) (35000, 1860.28) (36000, 1893.32) (37000, 1925.74) (38000, 1958.47) (39000, 1990.85) (40000, 2023.0) (41000, 2054.65) (42000, 2086.31) (43000, 2117.41) (44000, 2148.68) (45000, 2179.24) (46000, 2209.83) (47000, 2239.89) (48000, 2269.64) (49000, 2299.43) (50000, 2328.82) (51000, 2358.27) (52000, 2387.84) (53000, 2417.68) (54000, 2446.68) (55000, 2475.92) (56000, 2505.08) (57000, 2533.42) (58000, 2562.07) (59000, 2590.61) (60000, 2619.72) (61000, 2647.66) (62000, 2675.25) (63000, 2697.91) (64000, 2707.03) (65000, 2716.08) (66000, 2725.17) (67000, 2734.24) (68000, 2743.41) (69000, 2752.49) (70000, 2761.42) (71000, 2770.21) (72000, 2779.19) (73000, 2788.21) (74000, 2797.17) (75000, 2806.05) (76000, 2814.84) (77000, 2823.66) (78000, 2832.43) (79000, 2841.27) (80000, 2850.02) (81000, 2858.75) (82000, 2867.55) (83000, 2876.39) (84000, 2885.17) (85000, 2893.79) (86000, 2902.31) (87000, 2911.02) (88000, 2919.61) (89000, 2928.24) (90000, 2936.85) (91000, 2945.47) (92000, 2954.22) (93000, 2962.95) (94000, 2971.51) (95000, 2980.03) (96000, 2988.69) (97000, 2997.11) (98000, 3005.57) (99000, 3014.11)};
                    \addplot[color=orange,dashed,line width=0.75pt]
                    coordinates {(0, 0.15) (1000, 149.7) (2000, 197.03) (3000, 239.52) (4000, 292.27) (5000, 346.18) (6000, 398.76) (7000, 448.92) (8000, 496.73) (9000, 543.58) (10000, 590.41) (11000, 636.87) (12000, 682.76) (13000, 722.25) (14000, 741.64) (15000, 761.12) (16000, 780.5) (17000, 800.46) (18000, 820.57) (19000, 840.26) (20000, 859.98) (21000, 879.85) (22000, 900.22) (23000, 920.4) (24000, 940.03) (25000, 959.89) (26000, 977.25) (27000, 992.42) (28000, 1007.59) (29000, 1022.61) (30000, 1037.92) (31000, 1052.96) (32000, 1067.9) (33000, 1082.89) (34000, 1097.81) (35000, 1113.0) (36000, 1128.17) (37000, 1143.13) (38000, 1158.61) (39000, 1173.74) (40000, 1188.83) (41000, 1204.13) (42000, 1219.27) (43000, 1234.56) (44000, 1250.06) (45000, 1265.02) (46000, 1280.08) (47000, 1294.93) (48000, 1309.58) (49000, 1324.11) (50000, 1338.63) (51000, 1353.04) (52000, 1365.15) (53000, 1376.66) (54000, 1388.45) (55000, 1400.34) (56000, 1411.98) (57000, 1423.81) (58000, 1435.53) (59000, 1447.43) (60000, 1459.14) (61000, 1470.79) (62000, 1482.26) (63000, 1493.81) (64000, 1505.33) (65000, 1517.11) (66000, 1528.66) (67000, 1540.09) (68000, 1551.75) (69000, 1563.1) (70000, 1574.44) (71000, 1586.04) (72000, 1597.57) (73000, 1608.94) (74000, 1620.18) (75000, 1631.41) (76000, 1642.74) (77000, 1654.0) (78000, 1665.37) (79000, 1676.61) (80000, 1688.19) (81000, 1699.5) (82000, 1710.9) (83000, 1722.37) (84000, 1733.79) (85000, 1745.02) (86000, 1756.24) (87000, 1767.57) (88000, 1778.69) (89000, 1790.03) (90000, 1801.19) (91000, 1812.45) (92000, 1823.75) (93000, 1834.88) (94000, 1846.12) (95000, 1857.7) (96000, 1868.9) (97000, 1880.35) (98000, 1891.74) (99000, 1903.45)};
            \end{axis}
        \end{tikzpicture}
        \caption{ \\ }
        \label{fig:comm}
    \end{subfigure}
    
    \begin{subfigure}{0.45\textwidth}
        \centering
        \begin{tikzpicture}[trim axis left, trim axis right]
        \begin{axis}[
            height = 6cm,
    %             tick scale binop=\times,
    %             title={\ouralgo: FR vs. $\epsilon$},
    %             % ytick={0,0.5,1,1.5},
            % xtick={10,20,30,40},
            symbolic x coords={10, 20, 30, 40},
            xlabel={Agents ($m$)},
            xlabel near ticks,
            y tick label style={scaled ticks=base 10:-3},            
            ylabel={Fairness Regret},
            ylabel near ticks,
            ylabel style={font=\small},
            xlabel style={font=\small},
            ymajorgrids=true,
            xmajorgrids=true,
            grid style=dashed,
            legend style={at={(1,1)},{column sep=0.01cm},nodes={scale=0.75, transform shape}},  
            legend entries={\ourprivalgo,\algoname},
            ticklabel style={font=\small},
    %             % every axis plot/.append style={thick}
        ]       
        
                \addplot[color=mytealtres,loosely dashed, every mark/.append style={solid, fill=gray},mark=square*,line width=0.75pt]
                coordinates {
                (10, 2129.16)
                (20, 1467.50)
                (30, 1051.314)
                (40, 1011.85)}
               ;
                
                \addplot[color=myred,loosely dashed, every mark/.append style={solid, fill=gray},mark=square*,line width=0.75pt]
                coordinates {
                (10, 1142.79)
                (20, 1089.57)
                (30, 925.21)
                (40, 777.14)};

                % \addplot[color=mytealdos,loosely dashed, every mark/.append style={solid, fill=gray},mark=star]    
                % coordinates {
                % (2, 2341.131869558899)
                % (4, 2182.9453906995764)
                % (6, 1426.6962503216969)
                % (8, 1410.2518116239437)
                % (10, 1296.2869663935865)};

                % \addplot[color=green]    
                % coordinates {
                % (2, 3152)
                % (10, 3152)};
            \end{axis}
        \end{tikzpicture}
        \caption{}
        \label{fig:agents}
    \end{subfigure}
    % Total Regret vs Epsilon
    \begin{subfigure}{0.45\textwidth}
        \centering
        \begin{tikzpicture}[trim axis left, trim axis right]
            \begin{axis}[
                height = 6cm,
                % tick scale binop=\times,
%             title={\ouralgo: FR vs. $\epsilon$},
%             % ytick={0,0.5,1,1.5},
                xtick={0.1,1,10},
                xlabel={Privacy Budget (\(\epsilon\))},
                xlabel near ticks,
                xlabel style={font=\small},
                xmode=log,
                log ticks with fixed point,
                ylabel={Fairness Regret},
                ytick={0,10000,20000},
                ylabel near ticks,
                ylabel style={font=\small},
                % ylabel near ticks,
                % yticklabel pos=left, % the '*' avoids arrow heads
                ymajorgrids=true,
                xmajorgrids=true,
                grid style=dashed,
                legend style={at={(1,1)},{column sep=0.01cm},nodes={scale=0.75, transform shape}},  
                legend entries={\ourprivalgo,B0},
                ticklabel style={font=\small},
    %             % every axis plot/.append style={thick}
            ]
                \addplot[color=mytealtres,loosely dashed,every mark/.append style={solid, fill=gray},mark=square*,line width=0.75pt]
                coordinates {
                (0.1, 19683.06)
                (1, 2176.16)
                (10, 845.98)};
                \addplot[color=mygreen,line width=0.5pt]    
                coordinates {
                (0.1, 3988)
                (1, 3988)
                (10, 3988)};
                
            \end{axis}
        \end{tikzpicture}
    \caption{}
    \label{fig:epsilon}
    \end{subfigure}
    \caption{\textbf{(a)} Exp 1 : Fairness Regret vs. Rounds for single-agent baseline and proposed federated learning algorithms (m=10)  \textbf{(b)} Exp 2 : Fairness Regret vs. Rounds for different communication protocol baselines and proposed algorithms (m=10)  \textbf{(c)} Exp 3 : Fairness Regret trend w.r.t. number of agents (t=100,000)  \textbf{(d)} Exp 4 : Fairness Regret trend w.r.t. privacy budget (t=100,000) }
    \label{fig:epsilon}
\end{figure*}
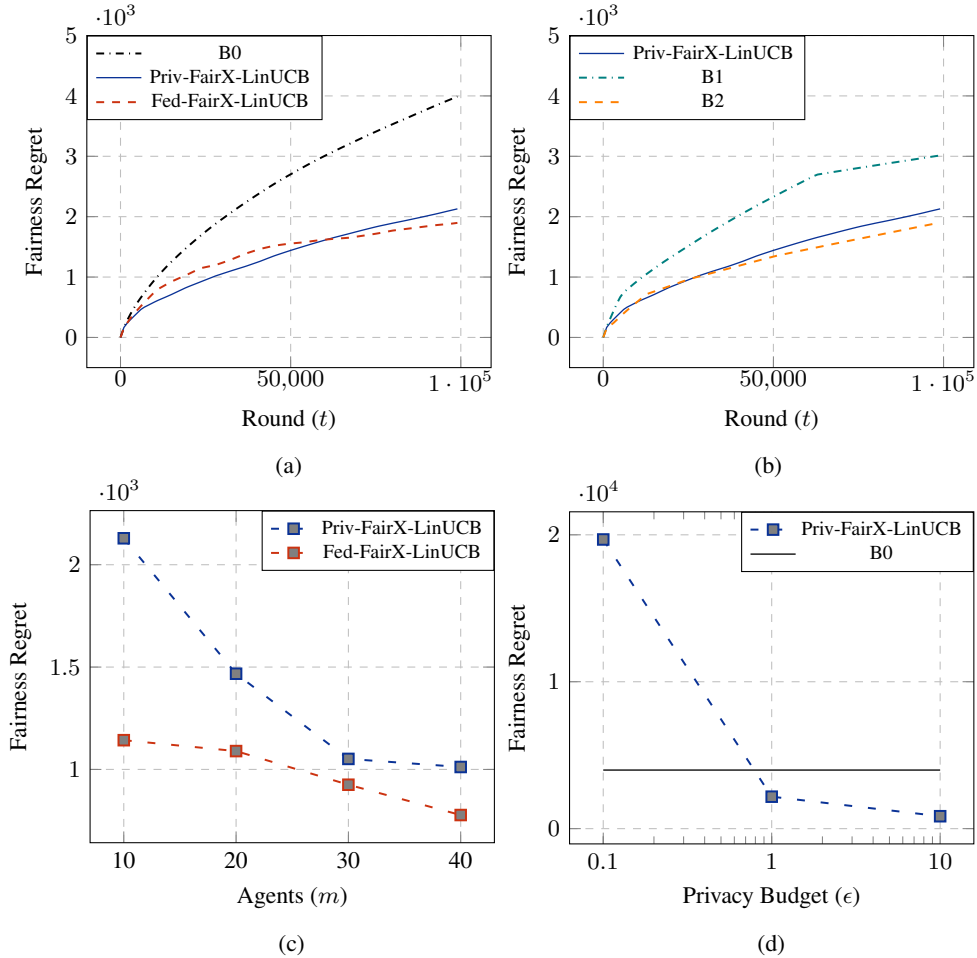

\subsection{Experimental Set-up}
\paragraph{Dataset}
Synthetic datasets were generated for all experiments by randomly fixing the model parameter  \(\theta^{*}\). Context size was set to five ($d=5$), and feature vectors \(\altmathcal{X}_{t}^{i}\) were sampled from a uniform distribution, \(x_{t}^{i} (a) \in [0,1]^d\). Noise  \(\eta_{t}^{i} (a)\), sampled from a normal distribution centered at \(0\), was added to produce reward observations.

\paragraph{Merit Function and Optimization}
A steep merit function,\(f(\cdot) = e^{10 \mu}\), was employed, similar to \cite{wang2021fairness}. Projected gradient descent was used in each round to solve the resulting non-convex optimization problem.

\subsection{Evaluation Set-up}

\paragraph{Evaluation Metric}
Fairness regret was used as the primary evaluation metric to assess the algorithms' ability to balance performance and fairness. Exp \(1\) and \(2\) shows fairness regret trends with respect to rounds while Exp \(3\) and \(4\) uses the fairness regret at \(t = 100,000\). The objective is to minimise fairness regret, and thus it is being used as the evaluation metric in the experiments. (Though our focus is on fairness, for completeness, we also evaluate the proposed algorithms for reward regret~\cite{wang2021fairness} in Appendix.)

\paragraph{Experiment Repetition}
All reported results were averaged over \(5\) runs to ensure statistical significance.
\subsection{Baselines}
As we propose a novel setting, there are no algorithms for direct comparisons. \(2\) different kinds of baselines are used to demonstrate the efficacy of our proposed algorithm. 

\paragraph{Single-Agent Baseline (\(B0\))}
FairX-LinUCB algorithm was employed as a single-agent baseline to facilitate comparison with federated learning approaches. We note it as \(B0\) in our experiments. Each agent essentially learns on their own do not communicate with other agents under this baseline.

\paragraph{Communication Protocol Baseline (\(B1\) , \(B2\))}
Two existing communication protocols from \cite{dubey2020differentially} and \cite{solanki2022differentially} were compared against the proposed protocol to evaluate its efficacy. These have been termed \(B1\) and \(B2\) respectively. Note that the algorithms proposed in \cite{dubey2020differentially} and \cite{solanki2022differentially} optimize for traditional regret, hence \ourprivalgo\ has been modified to just use their proposed communication protocols to form \(B1\) and \(B2\).

\subsection{Experiments}
\paragraph{Exp 1: Single-Agent vs Federated Learning}
Compares the fairness regret of baseline \(B0\) to the proposed non-private algorithm, \algoname\, and its differentially private counterpart, \ourprivalgo, for \(10\) agents (\(m\)). [\(\epsilon = 2\), \(\delta=0.1\), \(t \in [1,100000]\)]

\paragraph{Exp 2: Communication Protocol}
Assesses the performance \ourprivalgo\ against \(B1\) and \(B2\) with \(10\) agents. [\(\epsilon = 2\), \(\delta=0.1\), \(t \in [1,100000]\)]

\paragraph{Exp 3: Dependence on \(m\)}
Compares the impact of the number of agents (\(m\)) on the fairness regret of both proposed algorithms. [\(\epsilon = 2\), \(\delta=0.1\), \(t=100000]\)]

\paragraph{Exp 4: Privacy Budget}
Examines the effect of the privacy budget (\(\epsilon\)) on the fairness regret of the private algorithm. [\(m=10\), \(\delta=0.1\), \(t=100000]\)]

\subsection{Inferences}
\begin{itemize}
    \item Both federated learning algorithms outperformed the single-agent baseline in terms of fairness regret.
    \item \ourprivalgo\ outperforms B1 while producing comparable performance for B2. But unlike B2, \ourprivalgo\ has bounded communication gaps, which is necessary for the theoretical guarantees provided. In B2, communication gaps are as high as $O(T)$ in the later stages, and hence, in theory, fairness regrets could be as bad as $O(T)$ for B2. 
    \item The fairness regret scales as expected with respect to the number of agents, validating theoretical results.
    \item The private algorithm achieved reasonable performance for \(\epsilon\) values of \(1\) or greater, highlighting the trade-off between privacy and regret.
\end{itemize}
%%%%%%%%%%%%%%%%%%%%%%%%%%%%%%%%%%%%%%%%%%%%%%%%%%%%%%%%%%%%%%%%%%%%%%%%

% \bibliographystyle{ACM-Reference-Format} 
\bibliography{neurips_2023}

%%%%%%%%%%%%%%%%%%%%%%%%%%%%%%%%%%%%%%%%%%%%%%%%%%%%%%%%%%%%%%%%%%%%%%%%

\newpage

\begin{center}
      \Large\textbf{Appendix}\\
\end{center}
   
\section*{Proofs}

\begin{lemma} \label{lemma:Instanteous Regret} (Lemma 3 in main text)
    For the \algoname, with high probability, the instantaneous regret for any agent \(i\) is bounded by,
    \begin{align*}
        {FR}_{t}^{i} = \sum_{a \in \altmathcal{D}} \left| \pi_{t}^{i} - \pi_{*}^{i} \right| \leq \frac{4L\sqrt{\beta_{t}}}{\gamma} \mathbb{E}_{a \sim \pi_{t}^{i} } \left\| x_{t}^i (a) \right\|_{(V_t^{i})^{-1}}
    \end{align*}
\end{lemma}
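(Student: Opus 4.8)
The plan is to view both $\pi_t^i$ and $\pi_*^i$ as probability distributions obtained by normalizing merit scores and to bound their total-variation distance directly. Write $g(a) = f^i(\theta_t^i \cdot x_t^i(a))$ and $g^*(a) = f^i(\theta^* \cdot x_t^i(a))$ for the unnormalized scores under the optimistic estimate and the true parameter, with normalizers $Z = \sum_{a' \in \altmathcal{D}} g(a')$ and $Z^* = \sum_{a' \in \altmathcal{D}} g^*(a')$, so that $\pi_t^i(a) = g(a)/Z$ and $\pi_*^i(a) = g^*(a)/Z^*$. First I would reduce the claim to a statement about the unnormalized scores; the remaining work is then to control $|g(a) - g^*(a)|$ using the confidence region and to convert a uniform sum over actions into the policy-weighted expectation that appears on the right-hand side.

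For the reduction I would use the identity $\pi_t^i(a) - \pi_*^i(a) = \frac{g(a) - g^*(a)}{Z} + \pi_*^i(a)\,\frac{Z^* - Z}{Z}$, take absolute values, sum over $a$, and apply $\sum_a \pi_*^i(a) = 1$ together with $|Z^* - Z| \le \sum_a |g^*(a) - g(a)|$. This collapses the two normalizer-mismatch terms into a single factor of two and yields $FR_t^i \le \frac{2}{Z}\sum_{a \in \altmathcal{D}} |g(a) - g^*(a)|$.

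Next I would bound each score difference. Since $f^i$ is $L$-Lipschitz, $|g(a) - g^*(a)| \le L\,|(\theta_t^i - \theta^*)\cdot x_t^i(a)|$, and a Cauchy--Schwarz step in the $V_t^i$-geometry gives $|(\theta_t^i - \theta^*)\cdot x_t^i(a)| \le \left\|\theta_t^i - \theta^*\right\|_{V_t^i}\, w_t^i(a)$. On the high-probability event that $\theta^* \in CR_t^i$, both $\theta_t^i$ (the argmax, by construction) and $\theta^*$ lie in the ellipsoid of radius $\sqrt{\beta_t^i}$ around $\hat\theta_t^i$, so the triangle inequality gives $\left\|\theta_t^i - \theta^*\right\|_{V_t^i} \le 2\sqrt{\beta_t^i}$. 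Hence $|g(a) - g^*(a)| \le 2L\sqrt{\beta_t^i}\, w_t^i(a)$, and summing produces $FR_t^i \le \frac{4L\sqrt{\beta_t^i}}{Z}\sum_{a} w_t^i(a)$.

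The step I expect to be the crux is converting the uniform action sum $\sum_a w_t^i(a)$ into the policy-weighted expectation $\sum_a \pi_t^i(a)\, w_t^i(a)$, which is precisely what forces the minimum-merit hypothesis into the bound. Here I would use $\pi_t^i(a) = g(a)/Z \ge \gamma/Z$, a direct consequence of $f^i \ge \gamma$; this gives $w_t^i(a) \le \frac{Z}{\gamma}\,\pi_t^i(a)\, w_t^i(a)$ termwise and hence $\sum_a w_t^i(a) \le \frac{Z}{\gamma}\sum_a \pi_t^i(a)\,w_t^i(a)$. Substituting cancels the $Z$ against the $1/Z$ already present, replacing it by $1/\gamma$, and leaves $FR_t^i \le \frac{4L\sqrt{\beta_t^i}}{\gamma}\,\mathbb{E}_{a\sim\pi_t^i} w_t^i(a)$; using $\beta_t^i \le \beta_t = \max_{i} \beta_t^i$ then gives the stated bound. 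The only care needed is to keep the conditioning on $\{\theta^* \in CR_t^i\}$ explicit, since that event is the source of the ``with high probability'' qualifier.
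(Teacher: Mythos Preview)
Your proposal is correct and follows essentially the same route as the paper's proof: both reduce to $\frac{2}{Z}\sum_a |g(a)-g^*(a)|$, invoke Lipschitz continuity and Cauchy--Schwarz in the $V_t^i$-geometry, use the confidence ellipsoid (via $\hat\theta_t^i$) to get the factor $2\sqrt{\beta_t}$, and convert $1/Z$ to $\pi_t^i(a)/\gamma$ through the minimum-merit assumption. The only cosmetic difference is ordering---the paper inserts $\pi_t^i(a)/f^i(\theta_t^i\cdot x_t^i(a))$ in place of $1/Z$ before applying $f^i\ge\gamma$, whereas you postpone that substitution until after bounding the score differences---but the inequalities are identical.
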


\begin{proof} 
\allowdisplaybreaks
\begin{align*}
    {FR}_{t}^{i} &= \sum_{a\in \altmathcal{D}} \left| \frac{f^i(\theta^{*}x_{t}^i (a))}{\sum_{a^{'} \in \altmathcal{D}} f^i(\theta^{*}x_{t}^i (a^{\prime}))} - \frac{f^i(\theta^{i}_{t}x_{t}^i (a))}{\sum_{a^{'} \in \altmathcal{D}} f^i(\theta^{i}_{t}x_{t}^i (a^{\prime}))} \right| \\ \\
    &= \sum_{a} \left| \frac{ \splitfrac{f^i(\theta^{*}x_{t}^i (a)) \sum_{a^{'}} f^i(\theta^{i}_{t}x_{t}^i (a^{\prime}))}{ - f^i(\theta^{i}_{t}x_{t}^i (a)) \sum_{a^{'}} f^i(\theta^{*}x_{t}^i (a^{\prime})) }}{\sum_{a^{'}} f^i(\theta^{i}_{t}x_{t}^i (a^{\prime})) \sum_{a^{'}} f^i(\theta^{*}x_{t}^i (a^{\prime}))} \right| \\
    & \\
    &= \sum_{a}  \frac{ \left| \splitfrac{f^i(\theta^{*}x_{t}^{i} (a)) \sum_{a^{'}} \left( f^i(\theta^{i}_{t}x_{t}^i (a^{\prime})) -  f^i(\theta^{*}x_{t}^i (a^{\prime})) \right)}{+\left( f^i(\theta^{*}x_{t}^{i} (a)) - f^i(\theta^{i}_{t}x_{t}^{i} (a)) \right) \sum_{a^{'}} f^i(\theta^{*}x_{t}^i (a^{\prime}))} \right|}{\sum_{a^{'}} f^i(\theta^{i}_{t}x_{t}^i (a^{\prime})) \sum_{a^{'}} f^i(\theta^{*}x_{t}^i (a^{\prime}))}  \\
    & \\
    &\leq \sum_{a}  \frac{  \splitfrac{ \left|f^i(\theta^{*}x_{t}^{i} (a)) \sum_{a^{'}} \left( f^i(\theta^{i}_{t}x_{t}^i (a^{\prime})) -  f^i(\theta^{*}x_{t}^i (a^{\prime})) \right) \right|}{+\left|\left( f^i(\theta^{*}x_{t}^{i} (a)) - f^i(\theta^{i}_{t}x_{t}^{i} (a)) \right) \sum_{a^{'}} f^i(\theta^{*}x_{t}^i (a^{\prime})) \right|} }{\sum_{a^{'}} f^i(\theta^{i}_{t}x_{t}^i (a^{\prime})) \sum_{a^{'}} f^i(\theta^{*}x_{t}^i (a^{\prime}))} \\ \\
    &\leq  \frac{ 2\sum_{a} \left|  f^i(\theta^{*}x_{t}^{i} (a)) - f^i(\theta^{i}_{t}x_{t}^{i} (a))  \right|}{\sum_{a^{'}} f^i(\theta^{i}_{t}x_{t}^i (a^{\prime}))} \\ \\
    &= 2\sum_{a} \frac{\pi_{t}^{i}}{f^i(\theta^{i}_{t}x_{t}^{i} (a))} \left|  \begin{aligned}
        \splitfrac{f^i(\theta^{*}x_{t}^{i} (a)) - f^i(\hat{\theta^{i}_{t}}x_{t}^{i} (a))}{ + f^i(\hat{\theta^{i}_{t}}x_{t}^{i} (a)) - f^i(\theta^{i}_{t}x_{t}^{i} (a)) }
    \end{aligned} \right| \\ \\
    &\leq \frac{2L}{\gamma} \mathbb{E}_{a  \sim \pi_{t}^{i} } \left[ 
    \begin{aligned}
        \splitfrac{\left\| \theta^{*} - \hat{\theta^{i}_{t}} \right\|_{V_t^{i}} \left\| x_{t}^{i} (a) \right\|_{(V_t^{i})^{-1}} }{ + \left\| \hat{\theta^{i}_{t}} - \theta^{i}_{t} \right\|_{V_t^{i}} \left\| x_{t}^{i} (a) \right\|_{(V_t^{i})^{-1}} }
    \end{aligned} \right] \\
    &\leq \frac{4L\sqrt{\beta_{t}}}{\gamma}\mathbb{E}_{a \sim \pi_{t}^{i} } \left\|x_{t}^i (a)\right\|_{(V_t^{i})^{-1}}
\end{align*}
\end{proof}

\begin{theorem} \label{theorem: fairness regret} (Theorem 1 main text)
With high probability, \algoname\ achieves a fairness regret of  \(O\left(\frac{4\nu L\sqrt{\beta_{t}}}{\gamma}\sqrt{mTd\log{(1+\frac{T}{d})} + m^{2}d^{3}\log^{3}{(1+\frac{T}{d})}}\right)\)  when \(\mid\mid x_t^i(a)\mid\mid_2 \leq 1 \:\forall a,t,i\). 
\end{theorem}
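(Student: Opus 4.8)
The plan is to bound the cumulative fairness regret $\sum_{i\in M}FR^i=\sum_{i\in M}\sum_{t=1}^T FR_t^i$ over all agents, whose sublinear ($\sqrt{m}$) dependence on the number of agents is precisely the benefit of collaboration that the theorem advertises. I would begin from the per-round bound of Lemma~\ref{lemma:Instanteous Regret}, which gives $FR_t^i \leq \frac{4L\sqrt{\beta_t}}{\gamma}\,\mathbb{E}_{a\sim\pi_t^i}\|x_t^i(a)\|_{(V_t^i)^{-1}}$. Summing over $t$ and $i$ reduces the whole problem to controlling the cumulative expected confidence width $\sum_{i\in M}\sum_{t=1}^T \mathbb{E}_{a\sim\pi_t^i}\,w_t^i(a)$. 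Lemma~\ref{lemma:MDS} then lets me replace each expected width by the width $w_t^i(a_t^i)$ actually realized along the played action, at an additive cost of $\sqrt{2T\ln(4/\delta)}$ per agent; these $m$ martingale corrections are lower order and can be folded into the constant $\nu$ of the statement. The remaining task is to bound $\sum_{i\in M}\sum_{t=1}^T \|x_t^i(a_t^i)\|_{(V_t^i)^{-1}}$.

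Following the high-level idea flagged before the lemmas, I would compare the actual run to a single hypothetical agent that plays all $mT$ context--action pairs against a fully centralized gram matrix $G_t$ containing every observation of every agent up to the current step. Since the regularizer keeps each width at most $1$, Cauchy--Schwarz gives $\sum \|x\|_{G^{-1}} \leq \sqrt{mT}\,\bigl(\sum \min\{1,\|x\|_{G^{-1}}^2\}\bigr)^{1/2}$, and the Elliptical Potential Lemma~\ref{lemma:Elliptical Potential} bounds the second factor by $\sqrt{2d\log(1+mTL^2/(d\lambda))}$. This produces the dominant term $\sqrt{mTd\log(1+T/d)}$ inside the root.

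The actual algorithm, however, replaces $G_t$ by $V_t^i = U_t + S_t^i$, which aggregates only the observations synchronized up to the last communication instant $t'$ together with agent $i$'s own observations since $t'$; hence $V_t^i \preceq G_t$, and the realized widths are inflated relative to the idealized ones. I would quantify this inflation through the determinant ratio $\det(G_t)/\det(V_t^i)$, which is governed by the number of not-yet-synchronized observations held by the other $m-1$ agents inside the current epoch. The communication protocol is exactly what makes this controllable: the doubling schedule during the first $\lceil T/(md^2\log^2(1+T/d))\rceil$ rounds followed by fixed gaps caps the number of synchronizations at $\lceil 2md^2\log^2(1+T/d)\rceil$ and keeps each inter-communication gap bounded. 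Charging the inflation cost per epoch and summing over all epochs and agents then yields the correction term $m^2 d^3\log^3(1+T/d)$ under the square root. Combining the two contributions and using $\sqrt{\beta_t}\le\sqrt{\beta_T}$ gives the stated $O\!\left(\tfrac{4\nu L\sqrt{\beta_t}}{\gamma}\sqrt{mTd\log(1+T/d)+m^2d^3\log^3(1+T/d)}\right)$.

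I expect the deviation analysis of the third paragraph to be the crux. The delicate part is (i) establishing a clean determinant-ratio / width-inflation inequality that translates ``missing observations'' into a multiplicative blow-up of $\|x\|_{(V_t^i)^{-1}}$ relative to $\|x\|_{G_t^{-1}}$, and (ii) the epoch bookkeeping that ties this blow-up to the two-phase (geometric-then-arithmetic) communication schedule so that the accumulated staleness contributes only $m^2 d^3\log^3(1+T/d)$ and never dominates the $\sqrt{mTd}$ main term. Keeping the log argument at $(1+T/d)$ rather than $(1+mT/d)$ throughout the potential bookkeeping is the detail most likely to require care.
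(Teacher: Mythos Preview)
Your skeleton is right up to and including the comparison with the hypothetical centralized agent, but the third paragraph misses the key mechanism and misidentifies the role of $\nu$. In the paper, $\nu$ is \emph{not} a catch-all constant absorbing martingale slack; it is a threshold parameter used to split the communication epochs into ``good'' and ``bad'' ones according to whether the synchronized determinant ratio satisfies $\det\Psi_k/\det\Psi_{k-1}\le\nu$. Because $\det\Psi_0=m^d$ and $\det\Psi_p\le(m+mT/d)^d$, a pigeonhole on $\log\det$ shows at most $\lceil d\log_\nu(1+T/d)\rceil$ epochs can be bad. In good epochs one has $\|x\|_{(V_t^i)^{-1}}\le\sqrt{\det V^0_{(t,i)}/\det V_t^i}\,\|x\|_{(V^0_{(t,i)})^{-1}}\le\sqrt{\det\Psi_k/\det\Psi_{k-1}}\,\|x\|_{(V^0_{(t,i)})^{-1}}\le\sqrt{\nu}\,\|x\|_{(V^0_{(t,i)})^{-1}}$, so the inflation is a uniform constant and the elliptical potential on the hypothetical agent delivers the $\sqrt{mTd\log(1+T/d)}$ term. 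In bad epochs the inflation factor is uncontrolled, so the paper abandons the comparison entirely and bounds $\sum_{t\in[T_{k-1},T_k]}\|x_t^i(a_t^i)\|_{(V_t^i)^{-1}}$ directly via elliptical potential on each agent's own (monotone) $V_t^i$, together with the protocol's epoch-length cap $t_k\le T/(md^2\log^2(1+T/d))+1$. Multiplying the per-bad-epoch bound by the $d\log_\nu(1+T/d)$ bad epochs is what produces the $m^2d^3\log^3(1+T/d)$ term.

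Your plan to ``charge the inflation cost per epoch and sum over all epochs'' does not reproduce this: the total number of synchronizations $\lceil 2md^2\log^2(1+T/d)\rceil$ that you invoke is relevant only for the privacy budget, not for the regret; and summing a per-epoch inflation over all $O(md^2\log^2)$ epochs (rather than the $O(d\log)$ bad ones) overcounts by a factor of roughly $m d\log$, which would destroy the $\sqrt{mT}$ dependence you need. The inflation is multiplicative, not additive, so there is no way to separate a ``main term plus correction'' without first isolating the epochs where the multiplicative factor is bounded. The good/bad split is precisely the missing idea your last paragraph flags as delicate.
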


\begin{proof}

\noindent Consider a hypothetical agent denoted by index \(0\) who plays in the following \(mT\) rounds - \((1,1), (1,2), \ldots (1,m), \ldots, (T,m)\) sequentially. Let the gram matrix for agent \(0\) till round \((\tau,j)\) be given by \(V^{0}_{(\tau,j)} = mI + \sum_{i=1}^{i=m} \sum_{t=1}^{\tau-1} (x_{t}^i (a^{i}_{t})) (x_{t}^i (a^{i}_{t}))^T \) \(+ \sum_{i=1}^{i=j} (x_{\tau}^i (a^{i}_\tau)) (x_{\tau}^i (a^{i}_\tau))^T \). Substituting \(U_1 = mI\) and \(L=1\) in Lemma \ref{lemma:Elliptical Potential} we get, 
\begin{align*}
    \sum_{i=1}^{i=m} \sum_{t=1}^{T} \left\|x_{t}^{i} (a_{t}^{i})\right\|_{(V_{(t,i)}^{0})^{-1}}^2 \leq 2d \log\left(1 + \frac{T}{d}\right)
\end{align*}

\noindent Let the communication in the original algorithm occur at rounds \(T_{1}, T_{2}, \ldots, \) \( T_{p-1}\). Let \(\Psi_{k} = mI + \sum_{i=1}^{i=m} \sum_{t=1}^{T_{k}} (x_{t}^{i} (a_{t}^{i})) (x_{t}^{i} (a_{t}^{i}))^T\) be the synchronised gram matrix after communication round \(k\). 
Then \(\operatorname{det} \Psi_{0} = (m)^{d}\) and \(\operatorname{det} \Psi_{p} \leq \left(\frac{\operatorname{tr} (\Psi_{p})}{d}\right)^{d} \leq (m+mT/d)^{d}\). Thus, for any \(\nu > 1\), \(\log_{\nu}\left( \frac{\operatorname{det}(\Psi_{p})}{\operatorname{det} (\Psi_{0})} \right) \leq d \log_{\nu}(1 + \frac{T}{d})\). Let event \(E\) represent the set of rounds when \(1 \leq \frac{\operatorname{det}(\Psi_{k})}{\operatorname{det} (\Psi_{k-1})} \leq \nu\) is true. Then, in all but \(\ceil{ d \log_{\nu}(1 + \frac{T}{d}) }\) rounds \(E\) is true.

For any \(T_{k-1} \leq t \leq T_{k}\), when \(E\) is true,
\allowdisplaybreaks
\begin{align*}
\allowdisplaybreaks
    fr_{t}^{i} &\leq \frac{4L\sqrt{\beta_{t}}}{\gamma}\mathbb{E}_{a \sim \pi_{t}^{i} } \left\|x_{t}^{i} (a)\right\|_{(V_t^{i})^{-1}}  \\ 
    &\leq \frac{4L\sqrt{\beta_{t}}}{\gamma}\mathbb{E}_{a \sim \pi_{t}^{i} } \left\|x_{t}^{i} (a)\right\|_{(V^{0}_{(t,i)})^{-1}} \sqrt{\frac{\operatorname{det}V^{0}_{(t,i)} }{\operatorname{det}V^{i}_{t}}} \\
    &\leq \frac{4L\sqrt{\beta_{t}}}{\gamma}\mathbb{E}_{a \sim \pi_{t}^{i} } \left\|x_{t}^{i} (a)\right\|_{(V^{0}_{(t,i)})^{-1}} \sqrt{\frac{\operatorname{det} \Psi_{k} }{\operatorname{det}\Psi_{k-1} }} \\ 
    &\leq \frac{4\nu L\sqrt{\beta_{t}}}{\gamma}\mathbb{E}_{a \sim \pi_{t}^{i} } \left\|x_{t}^{i} (a)\right\|_{(V^{0}_{(t,i)})^{-1}}
\end{align*}
Here, second last equation follows because \(V^{i}_{t} \succeq \Psi_{k-1} \) and \( \Psi_{k} \succeq V^{0}_{(t,i)}\). \ Now, using Lemma 1 (main text),  
\allowdisplaybreaks
\begin{align*}
    &\sum_{i=1}^{m} \sum_{t\in E} fr_{t}^{i} \leq \sum_{i=1}^{m} \sum_{t\in E} \frac{4\nu L\sqrt{\beta_{t}}}{\gamma}\mathbb{E}_{a \sim \pi_{t}^{i} } \left\|x_{t}^{i} (a)\right\|_{(V^{0}_{(t,i)})^{-1}} \\
    & \leq \frac{4\nu L\sqrt{\beta_{T}}}{\gamma}\left( \sum_{i=1}^{m} \sum_{t=1}^{T} \left\|x_{t}^{i} (a_{t}^{i})\right\|_{(V^{0}_{(t,i)})^{-1}} 
        +  \sqrt{2mT\log{(4/\delta)}}\right) \\
     &\leq \frac{4\nu L\sqrt{mT\beta_{T}}}{\gamma} \left( \sqrt{ d \log(1 + \frac{T}{d}) } + \sqrt{2\log{(4/\delta)}} \right) \\
\end{align*}
Now, let us consider any period \(t \in [T_{k-1},T_{k}]\), where E does not hold and \(t_{k} = T_{k} - T_{k-1}\) represent the length of the interval. Fairness regret during this period is given by,
\allowdisplaybreaks
\begin{align*}
    &FR([T_{k-1},T_{k}]) %&= \sum_{i=1}^{m} \sum_{t= T_{K-1}}^{T_{k}} fr_{t}^{i} \\
    \leq \frac{4L\sqrt{\beta_{T}}}{\gamma} \sum_{i=1}^{m} \sum_{t= T_{K-1}}^{T_{k}} \mathbb{E}_{a \sim \pi_{t}^{i} } \left\|x_{t}^{i} (a)\right\|_{(V_t^{i})^{-1}}\\
&\leq \frac{4L\sqrt{\beta_{T}}}{\gamma} \left( \sum_{i=1}^{m} \sum_{t= T_{K-1}}^{T_{k}} \left\|x_{t}^{i} (a_{t}^{i})\right\|_{(V_t^{i})^{-1}} + m\sqrt{2t_{k}\log(4/\delta)}
    \right) \tag*{(Using Lemma 1 (main text))} \\
    &\leq \frac{4L\sqrt{\beta_{T}}}{\gamma}  \left( \sum_{i=1}^{m} \sqrt{t_{k} \log_{\nu}{\frac{\operatorname{det} V_{T_{k-1} + t_{k}}^{i}}{\operatorname{det}V_{T_{k-1}}^{i}}} } + m\sqrt{2t_{k}\log(4/\delta)}  \right) \\
\end{align*}
We know that \( \forall \) agents, \(t_{k} \leq \frac{T}{md^{2}\log^{2}{(1+T/d)}} + 1\) (otherwise there be a communication round), thus \(FR([T_{k-1},T_{k}]) \leq \frac{4L\sqrt{\beta_{T}}}{\gamma}  \) 

\( \left( \sqrt{\frac{m(T+md^{2}\log^{2}{(1+\frac{T}{d})})}{d\log{(1+\frac{T}{d})}} }+ \sqrt{\frac{2m(T+md^{2}\log^{2}{(1+\frac{T}{d})})}{d^{2}\log^{2}{(1+\frac{T}{d})}}\log(\frac{4}{\delta})}\right)\).
\if 0
\begin{align*}
    \leq \frac{4L\sqrt{\beta_{t}}}{\gamma}  \left[ \begin{aligned}
        &\sqrt{\frac{m(T+md^{2}\log^{2}{(1+T/d)})}{d\log{(1+T/d)}}} \\
        &+ \sqrt{2\frac{m(T+md^{2}\log^{2}{(1+T/d)})}{d^{2}\log^{2}{(1+T/d)}}\log(4/\delta)}
    \end{aligned} \right] 
\end{align*} \fi

\noindent Using the fact that \(E\) does not hold true in at most in \(\ceil{ d \log_{\nu}(1 + \frac{T}{d}) }\) rounds, we get  

\begin{align*}
    FR(T) \le O\left(\frac{4\nu L\sqrt{\beta_{T}}}{\gamma}\sqrt{
    mTd\log{(1+T/d)} + m^{2}d^{3}\log^{3}{(1+T/d)}}\right)
\end{align*}
\end{proof}

\begin{theorem} \label{theorem: private fairness regret} (Theorem 2 main text) With high probability,  when \(\mid\mid x_t^i(a)\mid\mid_2 \leq 1 \:\forall a,t,i\) and Lemma~\ref{lemma:beta_potential} holds, \ourprivalgo\ achieves a fairness regret of \\ 
\(O\left(\frac{4\nu L\sqrt{\beta_{T}}}{\gamma}\sqrt{mTd\log{(\frac{\bar{\rho}}{\underline{\rho}} + \frac{T}{d\underline{\rho}})} + m^{2}d^{3}\log^{3}{(\frac{\bar{\rho}}{\underline{\rho}} + \frac{T}{d\underline{\rho}})}}\right)\).

\end{theorem}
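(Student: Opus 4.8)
The plan is to mirror the proof of Theorem~\ref{theorem: fairness regret} almost verbatim, isolating the only two places where privatization alters the argument: the confidence width \(\beta_t\) and the effective regularization of the gram matrices. First I would condition on the high-probability event that the privatizer's injected noise is \((\alpha/2nm)\)-accurate, so that across all \(i \in M\) and all rounds the perturbed noise satisfies \(\underline{\rho} I \preceq H_t^i \preceq \bar{\rho} I\) (after the standard shift that renders the symmetric Gaussian noise positive definite) and the scalar perturbation is controlled by \(z\). This is exactly the regime in which Lemma~\ref{lemma:beta_potential} certifies \(\theta^* \in CR_t^i\) for the stated private \(\sqrt{\beta_t^i}\), where the extra \(mc\sqrt{\bar{\rho}} + mz\) terms absorb the estimation bias introduced by the tree-based noise. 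On this event the instantaneous-regret bound of Lemma~\ref{lemma:Instanteous Regret} carries over unchanged, since its derivation only used that \(\theta^*, \theta_t^i \in CR_t^i\) together with the Lipschitz and minimum-merit properties of \(f^i\), none of which depend on how \(V_t^i\) was assembled.

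Second I would re-run the hypothetical-single-agent-over-\(mT\)-rounds construction. The only change to the elliptical-potential step (Lemma~\ref{lemma:Elliptical Potential}) is in the initial matrix \(U_1\): instead of \(mI\) the regularizer now has eigenvalues pinned between \(m\underline{\rho}\) and \(m\bar{\rho}\) by the accuracy event, so \(\det^{1/d} U_1 = m\underline{\rho}\) and \(\operatorname{tr} U_1 = m d \bar{\rho}\). Substituting these into \(2d \log \frac{\operatorname{tr} U_1 + mT L^2}{d\, \det^{1/d} U_1}\) with \(L = 1\) yields \(2d \log\!\left(\frac{\bar{\rho}}{\underline{\rho}} + \frac{T}{d\underline{\rho}}\right)\), which is precisely the log-argument in the claimed bound. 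Everything downstream then proceeds identically: the split into rounds where the determinant-ratio event \(E\) holds versus fails, the use of Lemma~\ref{lemma:MDS} to pass from realized widths to their policy-expectations, the accounting of at most \(\lceil d \log_\nu(1 + T/d) \rceil\) bad rounds, and the unchanged communication gap bound \(t_k \le \lceil T/(m d^2 \log^2(1+T/d)) \rceil + 1\) that keeps the \(E\)-failing intervals at the same order. The final assembly gives the stated \(O\!\left(\frac{4\nu L \sqrt{\beta_T}}{\gamma} \sqrt{mTd\log(\frac{\bar{\rho}}{\underline{\rho}} + \frac{T}{d\underline{\rho}}) + m^2 d^3 \log^3(\frac{\bar{\rho}}{\underline{\rho}} + \frac{T}{d\underline{\rho}})}\right)\).

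The hard part will be re-establishing the positive-semidefinite orderings \(V_t^i \succeq \Psi_{k-1}\) and \(\Psi_k \succeq V_{(t,i)}^0\) that drive the chain of inequalities in Theorem~\ref{theorem: fairness regret}, because the injected noise \(H_t^i = (\hat{N} + \hat{N}^\intercal)/\sqrt{2}\) is symmetric but not a priori positive semidefinite, so the clean Loewner comparisons between the noiseless matrices \(G_t^i\) no longer transfer directly. I would handle this by using the accuracy event to sandwich each perturbed gram matrix between \(G_t^i + \underline{\rho} I\) and \(G_t^i + \bar{\rho} I\); the Loewner orderings then hold up to these two-sided bounds, and the resulting multiplicative slack is absorbed both into the constant \(\nu\) (via the determinant-ratio event) and into the \(\bar{\rho}/\underline{\rho}\) factor already present in the statement. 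A secondary verification is that the communication schedule, and hence the number of synchronizations \(n\) feeding the privatizer tree of depth \(1 + \lceil \log \tau \rceil\), is the same as in \algoname, so that the per-round noise scale underlying \(\underline{\rho}, \bar{\rho}, z\) is consistent with the \((\epsilon, \delta, m)\)-privacy budget and the regret bound holds on the same high-probability event as the privacy guarantee.
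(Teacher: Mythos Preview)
Your proposal is correct and follows essentially the same route as the paper: the paper's proof is a terse sketch noting that one replaces the regularizer \(mI\) by \(m\underline{\rho}I\) in the \(\Psi_k\) matrices to obtain the modified log argument, and that the key Loewner ordering \(V_t^i \succeq G_t^i + m\underline{\rho}I\) (exactly the sandwich you isolate via the accuracy event) is what re-establishes the chain of inequalities used when \(E\) holds, with everything else tracking Theorem~\ref{theorem: fairness regret}. Your write-up is in fact more explicit than the paper's own proof about where the noise-sandwich enters and why the communication schedule is unchanged.
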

\begin{proof}
    We note that the proof follows from the proof of Theorem~\ref{theorem: fairness regret} with minor changes. The regularisation of \(\Psi_{k}\) is done using \(m\underline{\rho} I\) instead of \(mI\). This allows for a tight bound on \(\log_{\nu}\left( \frac{\operatorname{det} (\Psi_{p})}{\operatorname{det}(\Psi_{0})} \right)\) with appropriate values of \(\bar{\rho}\) and \(\underline{\rho}\). In addition, the property \(V_{t}^{i} \succeq G_{t}^{i} + M\underline{\rho}I\), is important for stating that \(fr_{t}^{t} \leq \frac{4\nu L\sqrt{\beta_{t}}}{\gamma}\mathbb{E}_{a \sim \pi_{t}^{i} } \left\|x_{t}^{i} (a)\right\|_{(V^{0}_{(t,i)})^{-1}}\) when \(E\) holds true. The rest of the proof follows similar to the proof of Theorem~\ref{theorem: fairness regret}.
\end{proof}

\newpage
\section*{Additional Experiments}

For completeness, we provide evaluation of our proposed algorithms for reward regret (defined in \cite{wang2021fairness}). The same experiments (Exp1, Exp2, Exp3 and Exp4), as described in Section 6 of main text, are performed and plotted for reward regret.

% [3009.537541158756, 2022.1482217010616, 2161.8114335952987, 1383.9164497285542, 1632.1401226366554, 1224.2291907517565]

% [3027.814251030743, 3053.0032979190432, 2968.8011925361707, 2288.8568274001022, 2260.2942188295847, 1750.26074339728]

% [3145.1049759108437, 17766.264776730597, 17202.652187556832, 18474.28990623409, 16608.48790766844, 16698.873811438138]
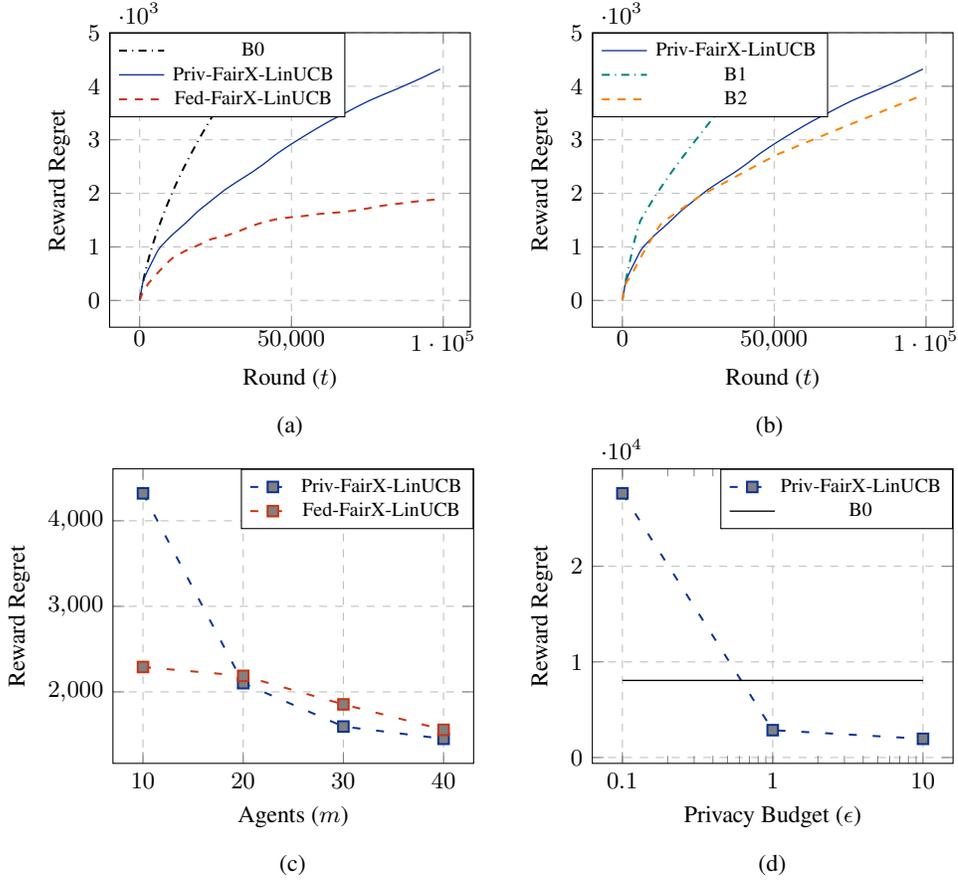
\begin{figure*}[hbt!]
    \centering
    \captionsetup[subfigure]{justification=centering}
    \begin{subfigure}{0.45\textwidth}
        \centering
        \begin{tikzpicture}[trim axis left, trim axis right]
            % \centering
            \begin{axis}[
                % width=0.35 \textwidth,
                height = 5.5cm,
                % tick scale binop=\times,
        %             title={\ouralgo: FR vs. $\epsilon$},
        %             % ytick={0,0.5,1,1.5},
                xtick={0,50000,100000},
                xticklabel style = {font=\small,yshift=0.5ex},
                xlabel={Round ($t$)},
                scaled x ticks=false,
                xlabel near ticks,
                ytick={0,1000,2000,3000,4000,5000},
                ylabel={Reward Regret},
                y tick label style={scaled ticks=base 10:-3},
                % ylabel near ticks,
                xlabel style={font=\small},
                ylabel style={font=\small},
                ymax=5000,
                ymajorgrids=true,
                xmajorgrids=true,
                grid style=dashed,
                legend style={at={(0,1)},anchor=north west,{column sep=0.05cm},nodes={scale=0.75, transform shape}},  
                legend entries={B0,\ourprivalgo,\algoname},
                ticklabel style={font=\small},
        %             % every axis plot/.append style={thick}
            ]
                    \addplot[color=mygreen,dashdotted,line width=0.75pt]
                    coordinates {(0, 0.36) (1000, 322.27) (2000, 573.75) (3000, 789.76) (4000, 985.08) (5000, 1164.94) (6000, 1327.41) (7000, 1481.19) (8000, 1633.82) (9000, 1777.28) (10000, 1915.34) (11000, 2047.19) (12000, 2174.63) (13000, 2296.41) (14000, 2416.01) (15000, 2529.81) (16000, 2637.98) (17000, 2746.86) (18000, 2851.52) (19000, 2955.33) (20000, 3056.92) (21000, 3155.14) (22000, 3253.28) (23000, 3349.54) (24000, 3442.61) (25000, 3535.64) (26000, 3626.67) (27000, 3715.97) (28000, 3802.94) (29000, 3886.78) (30000, 3969.47) (31000, 4051.27) (32000, 4133.02) (33000, 4214.91) (34000, 4296.4) (35000, 4376.01) (36000, 4453.86) (37000, 4530.52) (38000, 4606.85) (39000, 4683.63) (40000, 4758.37) (41000, 4832.69) (42000, 4905.62) (43000, 4979.39) (44000, 5051.72) (45000, 5122.42) (46000, 5192.36) (47000, 5260.72) (48000, 5327.8) (49000, 5393.27) (50000, 5457.85) (51000, 5521.6) (52000, 5586.3) (53000, 5650.89) (54000, 5714.02) (55000, 5776.97) (56000, 5837.52) (57000, 5898.38) (58000, 5957.64) (59000, 6016.62) (60000, 6074.26) (61000, 6130.58) (62000, 6186.4) (63000, 6242.86) (64000, 6298.45) (65000, 6353.79) (66000, 6407.67) (67000, 6461.57) (68000, 6516.14) (69000, 6570.31) (70000, 6623.72) (71000, 6676.6) (72000, 6729.87) (73000, 6782.58) (74000, 6834.83) (75000, 6886.32) (76000, 6937.22) (77000, 6987.86) (78000, 7038.68) (79000, 7089.1) (80000, 7139.38) (81000, 7189.29) (82000, 7238.54) (83000, 7287.04) (84000, 7336.45) (85000, 7386.22) (86000, 7436.44) (87000, 7486.65) (88000, 7535.52) (89000, 7584.46) (90000, 7633.81) (91000, 7683.23) (92000, 7732.36) (93000, 7781.02) (94000, 7829.2) (95000, 7876.68) (96000, 7923.32) (97000, 7969.63) (98000, 8016.69) (99000, 8063.13)};
                    \addplot[color=mytealtres,solid,line width=0.5pt]
                    coordinates {(0, 0.28) (1000, 327.63) (2000, 477.97) (3000, 597.26) (4000, 710.66) (5000, 822.57) (6000, 932.01) (7000, 1009.25) (8000, 1067.22) (9000, 1125.8) (10000, 1184.21) (11000, 1239.0) (12000, 1288.6) (13000, 1337.56) (14000, 1387.84) (15000, 1438.67) (16000, 1492.71) (17000, 1547.03) (18000, 1600.42) (19000, 1654.72) (20000, 1702.94) (21000, 1748.27) (22000, 1793.01) (23000, 1838.01) (24000, 1883.65) (25000, 1928.62) (26000, 1973.51) (27000, 2018.52) (28000, 2063.62) (29000, 2100.98) (30000, 2138.86) (31000, 2176.31) (32000, 2213.61) (33000, 2250.14) (34000, 2286.1) (35000, 2321.81) (36000, 2357.08) (37000, 2394.01) (38000, 2433.65) (39000, 2473.5) (40000, 2513.29) (41000, 2553.71) (42000, 2598.07) (43000, 2642.77) (44000, 2687.0) (45000, 2730.98) (46000, 2770.47) (47000, 2808.7) (48000, 2846.73) (49000, 2884.42) (50000, 2921.69) (51000, 2957.3) (52000, 2992.66) (53000, 3027.74) (54000, 3062.78) (55000, 3097.25) (56000, 3131.67) (57000, 3166.24) (58000, 3199.93) (59000, 3233.25) (60000, 3266.56) (61000, 3298.82) (62000, 3331.32) (63000, 3362.55) (64000, 3391.85) (65000, 3421.16) (66000, 3450.64) (67000, 3480.03) (68000, 3510.14) (69000, 3540.51) (70000, 3569.86) (71000, 3599.61) (72000, 3628.51) (73000, 3657.32) (74000, 3685.68) (75000, 3713.61) (76000, 3740.38) (77000, 3764.96) (78000, 3789.48) (79000, 3814.06) (80000, 3838.86) (81000, 3862.12) (82000, 3885.07) (83000, 3908.54) (84000, 3931.95) (85000, 3956.6) (86000, 3981.39) (87000, 4006.22) (88000, 4030.69) (89000, 4055.62) (90000, 4081.04) (91000, 4106.6) (92000, 4132.02) (93000, 4157.61) (94000, 4184.9) (95000, 4212.69) (96000, 4240.15) (97000, 4267.28) (98000, 4294.03) (99000, 4320.39)};
                    \addplot[color=myred,dashed,line width=0.75pt]
                    coordinates {(0, 0.29) (1000, 155.23) (2000, 242.72) (3000, 323.98) (4000, 393.21) (5000, 456.59) (6000, 519.25) (7000, 578.54) (8000, 637.22) (9000, 694.84) (10000, 751.24) (11000, 800.0) (12000, 831.32) (13000, 862.81) (14000, 894.79) (15000, 925.86) (16000, 950.37) (17000, 975.78) (18000, 1000.05) (19000, 1024.42) (20000, 1050.67) (21000, 1077.98) (22000, 1104.96) (23000, 1130.81) (24000, 1152.16) (25000, 1164.36) (26000, 1176.13) (27000, 1188.04) (28000, 1200.49) (29000, 1219.52) (30000, 1238.5) (31000, 1258.6) (32000, 1278.25) (33000, 1300.5) (34000, 1323.82) (35000, 1346.79) (36000, 1370.37) (37000, 1392.26) (38000, 1410.62) (39000, 1428.67) (40000, 1447.36) (41000, 1465.82) (42000, 1478.41) (43000, 1491.2) (44000, 1503.81) (45000, 1516.31) (46000, 1525.74) (47000, 1533.66) (48000, 1541.2) (49000, 1548.76) (50000, 1555.27) (51000, 1560.38) (52000, 1565.53) (53000, 1570.86) (54000, 1576.25) (55000, 1584.27) (56000, 1592.47) (57000, 1600.25) (58000, 1607.95) (59000, 1613.87) (60000, 1619.05) (61000, 1624.32) (62000, 1629.57) (63000, 1634.33) (64000, 1638.22) (65000, 1642.09) (66000, 1646.18) (67000, 1650.77) (68000, 1658.69) (69000, 1666.68) (70000, 1674.64) (71000, 1682.74) (72000, 1691.16) (73000, 1699.77) (74000, 1708.57) (75000, 1717.47) (76000, 1727.04) (77000, 1737.76) (78000, 1748.59) (79000, 1759.39) (80000, 1769.83) (81000, 1776.92) (82000, 1784.04) (83000, 1790.99) (84000, 1797.96) (85000, 1804.95) (86000, 1811.9) (87000, 1818.98) (88000, 1826.15) (89000, 1833.28) (90000, 1840.51) (91000, 1847.73) (92000, 1854.9) (93000, 1861.74) (94000, 1866.82) (95000, 1872.0) (96000, 1877.26) (97000, 1882.61) (98000, 1889.73) (99000, 1897.27)};
            \end{axis}
        \end{tikzpicture}
        \label{fig:rounds}
        \caption{ \\ }
    \end{subfigure}
    % Regret vs Rounds - Communication Protocols
    \begin{subfigure}{0.45\textwidth}
        \centering
        \begin{tikzpicture}[trim axis left, trim axis right]
            \begin{axis}[
                height = 5.5cm,
                % tick scale binop=\times,
        %             title={\ouralgo: FR vs. $\epsilon$},
        %             % ytick={0,0.5,1,1.5},
                xtick={0,50000,100000},
                xticklabel style = {font=\small,yshift=0.5ex},
                xlabel={Round ($t$)},
                scaled x ticks=false,
                xlabel near ticks,
                ytick={0,1000,2000,3000,4000,5000},
                y tick label style={scaled ticks=base 10:-3},
                ylabel={Reward Regret},
                ylabel near ticks,
                xlabel style={font=\small},
                ylabel style={font=\small},
                ymax=5000,
                ymajorgrids=true,
                xmajorgrids=true,
                grid style=dashed,
                legend style={at={(0,1)},anchor=north west,{column sep=0.05cm},nodes={scale=0.75, transform shape}},  
                legend entries={\ourprivalgo, B1, B2},
                ticklabel style={font=\small},
        %             % every axis plot/.append style={thick}
            ]
                    \addplot[color=mytealtres,solid,line width=0.5pt]
                    coordinates {(0, 0.28) (1000, 327.63) (2000, 477.97) (3000, 597.26) (4000, 710.66) (5000, 822.57) (6000, 932.01) (7000, 1009.25) (8000, 1067.22) (9000, 1125.8) (10000, 1184.21) (11000, 1239.0) (12000, 1288.6) (13000, 1337.56) (14000, 1387.84) (15000, 1438.67) (16000, 1492.71) (17000, 1547.03) (18000, 1600.42) (19000, 1654.72) (20000, 1702.94) (21000, 1748.27) (22000, 1793.01) (23000, 1838.01) (24000, 1883.65) (25000, 1928.62) (26000, 1973.51) (27000, 2018.52) (28000, 2063.62) (29000, 2100.98) (30000, 2138.86) (31000, 2176.31) (32000, 2213.61) (33000, 2250.14) (34000, 2286.1) (35000, 2321.81) (36000, 2357.08) (37000, 2394.01) (38000, 2433.65) (39000, 2473.5) (40000, 2513.29) (41000, 2553.71) (42000, 2598.07) (43000, 2642.77) (44000, 2687.0) (45000, 2730.98) (46000, 2770.47) (47000, 2808.7) (48000, 2846.73) (49000, 2884.42) (50000, 2921.69) (51000, 2957.3) (52000, 2992.66) (53000, 3027.74) (54000, 3062.78) (55000, 3097.25) (56000, 3131.67) (57000, 3166.24) (58000, 3199.93) (59000, 3233.25) (60000, 3266.56) (61000, 3298.82) (62000, 3331.32) (63000, 3362.55) (64000, 3391.85) (65000, 3421.16) (66000, 3450.64) (67000, 3480.03) (68000, 3510.14) (69000, 3540.51) (70000, 3569.86) (71000, 3599.61) (72000, 3628.51) (73000, 3657.32) (74000, 3685.68) (75000, 3713.61) (76000, 3740.38) (77000, 3764.96) (78000, 3789.48) (79000, 3814.06) (80000, 3838.86) (81000, 3862.12) (82000, 3885.07) (83000, 3908.54) (84000, 3931.95) (85000, 3956.6) (86000, 3981.39) (87000, 4006.22) (88000, 4030.69) (89000, 4055.62) (90000, 4081.04) (91000, 4106.6) (92000, 4132.02) (93000, 4157.61) (94000, 4184.9) (95000, 4212.69) (96000, 4240.15) (97000, 4267.28) (98000, 4294.03) (99000, 4320.39)};
                    \addplot[color=teal,dashdotted,line width=0.75pt]
                    coordinates {(0, 0.24) (1000, 348.41) (2000, 609.31) (3000, 867.75) (4000, 1121.94) (5000, 1339.13) (6000, 1498.36) (7000, 1592.52) (8000, 1688.66) (9000, 1784.25) (10000, 1876.94) (11000, 1965.15) (12000, 2050.81) (13000, 2135.15) (14000, 2217.96) (15000, 2299.92) (16000, 2380.4) (17000, 2460.56) (18000, 2538.4) (19000, 2614.2) (20000, 2688.38) (21000, 2761.01) (22000, 2836.2) (23000, 2909.23) (24000, 2982.39) (25000, 3054.62) (26000, 3127.36) (27000, 3200.78) (28000, 3273.1) (29000, 3345.86) (30000, 3417.49) (31000, 3485.07) (32000, 3554.31) (33000, 3622.74) (34000, 3690.42) (35000, 3756.06) (36000, 3823.13) (37000, 3888.89) (38000, 3955.3) (39000, 4020.87) (40000, 4085.94) (41000, 4150.15) (42000, 4214.33) (43000, 4277.2) (44000, 4340.55) (45000, 4402.7) (46000, 4464.54) (47000, 4525.42) (48000, 4585.62) (49000, 4646.04) (50000, 4705.43) (51000, 4765.06) (52000, 4824.92) (53000, 4885.2) (54000, 4943.93) (55000, 5003.01) (56000, 5062.02) (57000, 5119.39) (58000, 5177.35) (59000, 5235.15) (60000, 5293.99) (61000, 5350.58) (62000, 5406.56) (63000, 5452.31) (64000, 5470.62) (65000, 5488.8) (66000, 5507.03) (67000, 5525.27) (68000, 5543.69) (69000, 5561.9) (70000, 5579.84) (71000, 5597.48) (72000, 5615.52) (73000, 5633.65) (74000, 5651.62) (75000, 5669.44) (76000, 5687.06) (77000, 5704.78) (78000, 5722.38) (79000, 5740.11) (80000, 5757.73) (81000, 5775.26) (82000, 5792.95) (83000, 5810.66) (84000, 5828.31) (85000, 5845.61) (86000, 5862.73) (87000, 5880.28) (88000, 5897.52) (89000, 5914.83) (90000, 5932.11) (91000, 5949.41) (92000, 5966.99) (93000, 5984.55) (94000, 6001.73) (95000, 6018.86) (96000, 6036.21) (97000, 6053.16) (98000, 6070.16) (99000, 6087.3)};
                    \addplot[color=orange,dashed,line width=0.75pt]
                    coordinates {(0, 0.3) (1000, 298.44) (2000, 394.53) (3000, 480.77) (4000, 586.91) (5000, 695.21) (6000, 801.04) (7000, 901.78) (8000, 998.05) (9000, 1092.49) (10000, 1187.15) (11000, 1280.57) (12000, 1373.14) (13000, 1452.7) (14000, 1491.74) (15000, 1530.99) (16000, 1570.08) (17000, 1610.17) (18000, 1650.62) (19000, 1690.29) (20000, 1730.07) (21000, 1770.06) (22000, 1811.08) (23000, 1851.79) (24000, 1891.3) (25000, 1931.26) (26000, 1966.26) (27000, 1996.87) (28000, 2027.49) (29000, 2057.8) (30000, 2088.72) (31000, 2119.1) (32000, 2149.2) (33000, 2179.43) (34000, 2209.47) (35000, 2240.08) (36000, 2270.61) (37000, 2300.79) (38000, 2332.02) (39000, 2362.58) (40000, 2393.02) (41000, 2423.9) (42000, 2454.46) (43000, 2485.32) (44000, 2516.61) (45000, 2546.88) (46000, 2577.14) (47000, 2607.09) (48000, 2636.59) (49000, 2665.92) (50000, 2695.21) (51000, 2724.22) (52000, 2748.67) (53000, 2771.96) (54000, 2795.77) (55000, 2819.84) (56000, 2843.34) (57000, 2867.3) (58000, 2890.98) (59000, 2915.0) (60000, 2938.65) (61000, 2962.22) (62000, 2985.38) (63000, 3008.7) (64000, 3032.0) (65000, 3055.84) (66000, 3079.2) (67000, 3102.34) (68000, 3125.93) (69000, 3148.91) (70000, 3171.89) (71000, 3195.33) (72000, 3218.59) (73000, 3241.6) (74000, 3264.29) (75000, 3287.0) (76000, 3309.93) (77000, 3332.75) (78000, 3355.69) (79000, 3378.44) (80000, 3401.88) (81000, 3424.82) (82000, 3447.85) (83000, 3470.99) (84000, 3494.13) (85000, 3516.77) (86000, 3539.46) (87000, 3562.36) (88000, 3584.83) (89000, 3607.78) (90000, 3630.26) (91000, 3653.04) (92000, 3675.9) (93000, 3698.4) (94000, 3721.08) (95000, 3744.47) (96000, 3767.11) (97000, 3790.27) (98000, 3813.34) (99000, 3837.0)};
            \end{axis}
        \end{tikzpicture}
        \caption{ \\ }
        \label{fig:comm}
    \end{subfigure}

    \begin{subfigure}{0.45\textwidth}
        \centering
        \begin{tikzpicture}[trim axis left, trim axis right]
        \begin{axis}[
            height = 5.5cm,
    %             tick scale binop=\times,
    %             title={\ouralgo: FR vs. $\epsilon$},
    %             % ytick={0,0.5,1,1.5},
            % xtick={10,20,30,40},
            % symbolic x coords={10, 20, 30, 40},
            xlabel={Agents ($m$)},
            xlabel near ticks,
            % y tick label style={scaled ticks=base 10:-3},            
            ylabel={Reward Regret},
            ylabel near ticks,
            ylabel style={font=\small},
            xlabel style={font=\small},
            ymajorgrids=true,
            xmajorgrids=true,
            grid style=dashed,
            legend style={at={(1,1)},{column sep=0.01cm},nodes={scale=0.75, transform shape}},  
            legend entries={\ourprivalgo,\algoname},
            ticklabel style={font=\small},
    %             % every axis plot/.append style={thick}
        ]       
        
                \addplot[color=mytealtres,loosely dashed, every mark/.append style={solid, fill=gray},mark=square*,line width=0.75pt]
                coordinates {
                (10, 4320.39)
                (20, 2101.95)
                (30, 1594.58)
                (40, 1451.2)}
               ;
                
                \addplot[color=myred,loosely dashed, every mark/.append style={solid, fill=gray},mark=square*,line width=0.75pt]
                coordinates {
                (10, 2290.67)
                (20, 2187.44)
                (30, 1853.84)
                (40, 1556.34)};

                % \addplot[color=mytealdos,loosely dashed, every mark/.append style={solid, fill=gray},mark=star]    
                % coordinates {
                % (2, 2341.131869558899)
                % (4, 2182.9453906995764)
                % (6, 1426.6962503216969)
                % (8, 1410.2518116239437)
                % (10, 1296.2869663935865)};

                % \addplot[color=green]    
                % coordinates {
                % (2, 3152)
                % (10, 3152)};
            \end{axis}
        \end{tikzpicture}
        \caption{}
        \label{fig:agents}
    \end{subfigure}
    % Total Regret vs Epsilon
    \begin{subfigure}{0.45\textwidth}
        \centering
        \begin{tikzpicture}[trim axis left, trim axis right]
            \begin{axis}[
                height = 5.5cm,
                % tick scale binop=\times,
%             title={\ouralgo: FR vs. $\epsilon$},
%             % ytick={0,0.5,1,1.5},
                xtick={0.1,1,10},
                xlabel={Privacy Budget (\(\epsilon\))},
                xlabel near ticks,
                xlabel style={font=\small},
                xmode=log,
                log ticks with fixed point,
                ylabel={Reward Regret},
                ytick={0,10000,20000},
                ylabel near ticks,
                ylabel style={font=\small},
                % ylabel near ticks,
                % yticklabel pos=left, % the '*' avoids arrow heads
                ymajorgrids=true,
                xmajorgrids=true,
                grid style=dashed,
                legend style={at={(1,1)},{column sep=0.01cm},nodes={scale=0.75, transform shape}},  
                legend entries={\ourprivalgo,B0},
                ticklabel style={font=\small},
    %             % every axis plot/.append style={thick}
            ]
                \addplot[color=mytealtres,loosely dashed,every mark/.append style={solid, fill=gray},mark=square*,line width=0.75pt]
                coordinates {
                (0.1, 27624.51)
                (1, 2861.13)
                (10, 1956.14)};
                \addplot[color=mygreen,line width=0.5pt]    
                coordinates {
                (0.1,  8063.13)
                (1,  8063.13)
                (10,  8063.13)};
                
            \end{axis}
        \end{tikzpicture}
    \caption{}
    \label{fig:epsilon}
    \end{subfigure}
    \caption{\textbf{(a)} Exp 1 : Reward Regret vs. Rounds for single-agent baseline and proposed federated learning algorithms (m=10)  \textbf{(b)} Exp 2 : Reward Regret vs. Rounds for different communication protocol baselines and proposed algorithms (m=10)  \textbf{(c)} Exp 3 : Reward Regret trend w.r.t. number of agents (t=100,000)  \textbf{(d)} Exp 4 : Reward Regret trend w.r.t. privacy budget (t=100,000) }
    \label{fig:epsilon}
\end{figure*}

% \bibliographystyle{named}
% \bibliography{ijcai24}

\end{document}